\documentclass{article}

\usepackage[table]{xcolor} % For alternating row colors
\definecolor{cvprblue}{rgb}{0.21,0.49,0.74}
\usepackage[pagebackref,breaklinks,colorlinks,allcolors=cvprblue]{hyperref}
\PassOptionsToPackage{numbers, compress}{natbib}

\usepackage{array}

\definecolor{OpenSrcBlue}{rgb}{0.921,0.961,1.0}
\newcolumntype{O}{>{\columncolor{OpenSrcBlue}}c}
\newcommand{\avgrow}{\rowcolor{OpenSrcBlue}}

 \usepackage[preprint]{neurips_2026}

\usepackage[utf8]{inputenc} % allow utf-8 input
\usepackage[T1]{fontenc}    % use 8-bit T1 fonts
\usepackage{hyperref}       % hyperlinks
\usepackage{url}            % simple URL typesetting
\usepackage{booktabs}       % professional-quality tables
\usepackage{amsfonts}       % blackboard math symbols
\usepackage{nicefrac}       % compact symbols for 1/2, etc.
\usepackage{microtype}      % microtypography
\usepackage{xcolor}         % colors

\usepackage{graphicx}
\usepackage{booktabs}

\usepackage{amsmath}
\usepackage{multirow}

\newcommand{\mstd}[2]{#1$_{#2}$}

\usepackage{algorithm}
\usepackage{algpseudocode}

\usepackage{enumitem}

\usepackage{booktabs} % For professional looking tables
\usepackage{multirow}
\usepackage{amsthm}

\newtheorem{proposition}{Proposition}
\newtheorem{remark}{Remark}
\newtheorem{theorem}{Theorem}

\usepackage{caption}

\definecolor{lightblue}{RGB}{235,245,255} % Soft blue tint
\definecolor{headerpurple}{RGB}{230,238,250}

\usepackage{makecell} % Add this to your preamble if not already there

\usepackage{booktabs}
\usepackage{multirow}
\usepackage{makecell}
\usepackage{threeparttable}
\usepackage{siunitx}
\usepackage{pifont}    % for checkmarks

\newcommand{\ourmodel}{CorrFlow}

\title{
Correlation-Guided Flow Matching with Annealed Masking for Spatial Transcriptomics Generation
}

\author{%
  Yupei Zhang \\
  Department of Clinical Neurosciences \\
  University of Cambridge \\
  UK \\
  \texttt{yz931@cam.ac.uk} \\
  \And
  Hao Chen \\ 
  Department of Clinical Neurosciences \\ 
  University of Cambridge \\ 
  UK \\ 
  \texttt{hc666@cam.ac.uk} \\
  \And
  Li Pan \\ 
  Usher Institute \\ 
  The University of Edinburgh \\ 
  EH16 RUX \\ 
  \texttt{l.pan-13@sms.ed.ac.uk} \\
  \And
  Chao Li\textsuperscript{\ensuremath{\dagger}} \\ 
  Department of Clinical Neurosciences \\ 
  University of Cambridge \\ 
  UK \\ 
  \texttt{cl647@cam.ac.uk} \\
  \And
  Xiaohan Xing\textsuperscript{\ensuremath{\dagger}} \\
  Department of Diagnostic Radiology \\ 
  National University of Singapore \\ 
  Singapore \\ 
  \texttt{xhxing@nus.edu.sg} \\
}

\begin{document}

\maketitle

\begingroup
\renewcommand{\thefootnote}{\fnsymbol{footnote}}
\footnotetext[2]{Corresponding authors.}
\endgroup

\begin{abstract}
Spatial transcriptomics (ST) provides spatially resolved gene expression profiling but remains expensive, motivating the prediction of ST from histology images. Generative models have emerged as a mainstream paradigm for ST prediction due to their ability to model the conditional distribution of gene expression and capture its inherent stochasticity.
However, these methods typically treat genes as independent prediction targets and overlook the intrinsic gene-gene interactions in biological systems, which limits their ability to preserve biologically meaningful co-expression patterns.
We argue that gene-gene interactions, which reflect shared pathways and regulatory mechanisms, are essential for generating numerically accurate and biologically coherent ST profiles.
In this paper, we propose \textbf{\ourmodel}, a correlation-guided flow matching framework for histology-to-ST prediction that explicitly models gene-gene dependencies through two complementary mechanisms.
First, we introduce an \textit{annealed masked flow matching} strategy, where subsets of genes are progressively masked following a timestep-dependent annealing schedule, encouraging the model to infer masked genes conditioned on the remaining genes and promoting joint conditional modeling beyond per-gene marginal estimation.
Second, we devise a \textit{gene graph-regularized optimization} scheme that integrates prior knowledge from the STRING database and data-driven co-expression estimated by WGCNA to construct a gene affinity graph, which enforces both local consistency and global smoothness in the predicted expression.
% Extensive experiments across 12 datasets demonstrate that \textbf{\ourmodel} improves both per-gene prediction accuracy and gene-gene correlation fidelity, leading to more biologically coherent ST predictions.
Extensive experiments across 12 datasets show that \textbf{\ourmodel} achieves the best average PCC and HPCC among evaluated methods, leading to more biologically coherent ST predictions.
% Spatial transcriptomics (ST) provides spatially resolved gene expression profiling but remains expensive and low-throughput, motivating the prediction of ST from histology images. Generative models are well-suited for this task as they capture the inherent uncertainty and stochasticity of gene expression. However, existing approaches largely treat genes as independent targets, failing to account for the structured biological dependencies among genes. We argue that gene interactions are not a nuisance to model around but essential for biologically faithful generation. We introduce \ourmodel~, which reformulates flow matching from marginal estimation to conditional joint modeling over co-regulated gene sets. The core mechanism is masked flow matching: at each step, a subgroup of genes is masked and recovered conditional on the remaining observed genes, breaking the per-gene factorization that drives prior methods toward marginal solutions. To further enforce the learned co-expression structure, we construct a gene affinity graph derived from regulatory priors that supervises the output dependency structure, ensuring the model is anchored to biologically meaningful dependencies rather than arbitrary co-occurrence. Across 12 datasets, \ourmodel~improves joint distribution fidelity by \red{[X\%]} and recovers co-expression structure that prior methods systematically destroy-establishing joint modeling, rather than pointwise accuracy, as the correct objective for transcriptomics generation.
\end{abstract}

% Introduction (outline)
% Problem–Gap–Solution–Contrib

\section{Introduction}
\label{sec:intro}

%%% original version
% Spatial transcriptomics (ST) measures gene expression at spatially resolved locations across tissue, offering a molecular complement to histology. Yet ST assays are expensive and low-throughput, which has motivated a line of work that predicts spatial gene expression directly from H\&E-stained whole-slide images~\cite{wang2025benchmarking}. Early efforts adopt deterministic regression~\cite{he2020integrating, zeng2022spatial, pang2021leveraging, chung2024accurate}, learning a direct mapping from image patch to expression vector and producing a single point estimate per location. Retrieval-based methods~\cite{xie2023spatially, yang2023exemplar} instead predict expression by matching query patches against a reference set, thereby enhancing trustworthiness.  More recent generative approaches~\cite{ICLR2025_31cc93d1, pmlr-v267-huang25t} model the full conditional distribution of expression given histology, enabling more faithful predictions. 

%% xiaohan revised
Spatial transcriptomics (ST) measures gene expression at spatially resolved locations across tissue, providing a molecular complement to histology and enabling applications such as spatial domain identification and tumor microenvironment analysis~\cite{marx2021method, staahl2016visualization, hu2021spagcn, biancalani2021deep, chen2023cell, zhang2024inferring, xiao2021tumor, de2023evolving}.
However, current ST assays remain expensive and low-throughput, motivating efforts to predict ST from H\&E-stained histology images~\cite{wang2025benchmarking}.
Early approaches adopt deterministic regression~\cite{he2020integrating, zeng2022spatial, pang2021leveraging, chung2024accurate}, learning a direct mapping from image patches to expression vectors, but failing to capture the inherent uncertainty and stochasticity of gene expression.
Retrieval-based methods~\cite{xie2023spatially, yang2023exemplar} predict expression by matching query patches against a reference set but depend heavily on its diversity and coverage.
Recently, generative approaches~\cite{ICLR2025_31cc93d1, pmlr-v267-huang25t} have emerged as a promising paradigm for ST prediction by modeling the conditional distribution of gene expression and capturing its inherent stochasticity.

\textbf{Motivation.} Despite their advantages, current generative approaches~\cite{ICLR2025_31cc93d1, pmlr-v267-huang25t} leave a fundamental issue insufficiently addressed: genes are typically treated as independent targets, despite their rich and structured biological dependencies. In reality, gene expression is governed by regulatory networks, co-expression modules, and coordinated pathway activity, forming highly interdependent systems~\cite{muzio2021biological, wang2019pathway, yan2025pathway, jaume2024modeling, sun2024sprite}. However, most existing models lack explicit mechanisms to enforce such gene-gene relationships.
We show in Section~\ref{sec:theory} that this limitation is intrinsic to the training objective. Although generative models define a joint distribution over genes, standard diffusion (Fig.~\ref{fig:comparison} Diffusion) and flow matching (Fig.~\ref{fig:comparison} Flow Matching) objectives decompose across dimensions, providing no explicit incentive to capture cross-gene dependencies.
This can lead to uneven and uncoordinated performance across genes, where some genes are accurately predicted (e.g., $x_1^1$ and $x_1^2$) while other highly correlated genes (e.g., $x_1^3$ ) remain poorly estimated. 
As a result, the model fails to effectively capture biologically meaningful correlation patterns, limiting biological fidelity and downstream utility.
This motivates explicitly modeling gene-gene interactions within generative frameworks to promote more coherent and biologically consistent predictions.

% \textbf{Our approach.} We address this challenge with two complementary mechanisms that target the same failure mode. First, we observe that standard flow matching admits a trivial solution at the per-gene level: the model can simply propagate each gene's input value through the velocity field without learning any cross-gene structure. To force the model to capture co-expression, we propose \emph{annealed masked flow matching}, in which a subset of genes is masked at the input and the model must recover them from the remaining genes. This breaks the per-gene factorization: a masked gene cannot be predicted from its own input dimension and must instead be inferred from its co-expressed neighbors, making joint modeling \emph{necessary for optimality}. Second, we construct a gene affinity graph from reference expression data and use it to guide the generation, ensuring that the gene-gene correlations the model learns remain biologically coherent under finite capacity.

\begin{figure}[t]
    \centering
    \includegraphics[scale=.36]{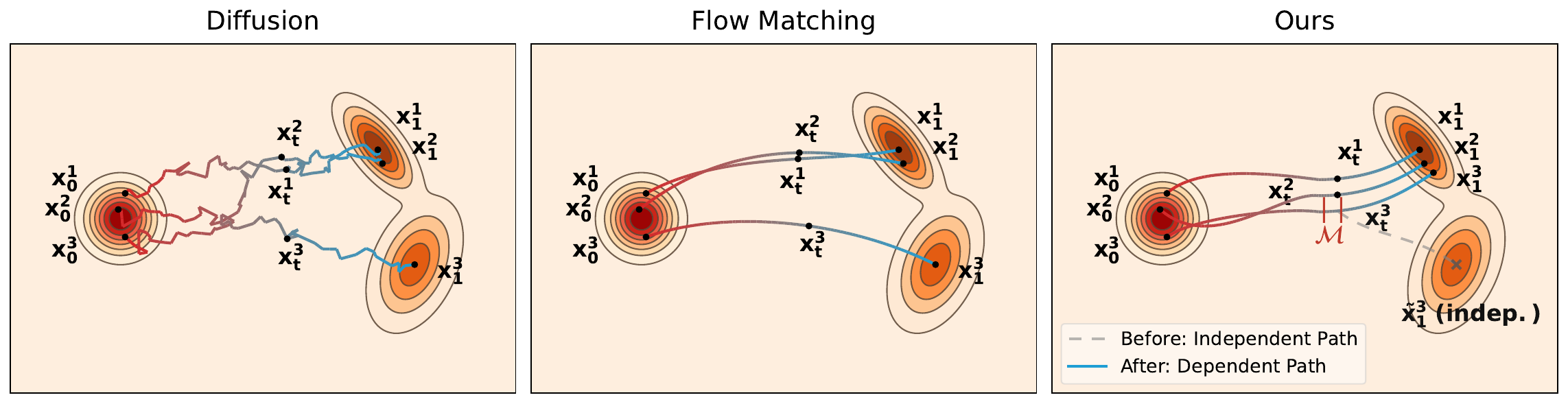}
    \caption{\textbf{Correlation-guided Flow Matching enables joint trajectory inference for correlated genes.} Given correlated genes with source states $\{x_0^i\}_{i=1}^3$, existing methods such as Diffusion and Flow Matching treat each trajectory independently, failing to capture inter-gene dependencies and producing inconsistent predictions $\tilde{x}_1^3$. Our method applies an annealed masking $\mathcal{M}$ and an explicit graph regularization, enforcing dependency across trajectories of genes $1$ and $2$. This dependency-aware guidance steers $x_1^3$ toward a coherent joint prediction consistent with the target distribution. 
    % \textcolor{red}{[Xiaohan: In the current description, the prediction of $x_1^3$ appears to benefit solely from masking. I would suggest attributing this improvement to both masking and the explicit graph regularization.]}
    % \hao{The text ($x_0, x_1$, etc.) is a bit hard to see in the figure. Consider shifting it a bit, using another color, or making it bold.}
    }
    \label{fig:comparison} 
\end{figure}

% xiaohan revised.
% \textbf{Our approach.} In this paper, we present \textbf{\ourmodel}, a novel conditional flow matching framework that incorporates gene-gene interactions for histology-to-ST prediction.
In this paper, we propose \textbf{\ourmodel}, a correlation-guided flow matching framework for histology-to-ST prediction that explicitly models gene-gene dependencies through two complementary mechanisms.
% \textcolor{red}{[Xiaohan: I would suggest renaming the model to CorrSTFlow, as FlowST is too similar to STFlow and does not reflect the core idea. Please update the name consistently throughout the paper.]}. 
% The model consists of two complementary modules to enforce cross-gene joint modeling.
% First, to encourage the model to capture co-expression structure, we propose \emph{annealed masked flow matching}, where a subset of genes is masked at the input and must be inferred from the remaining genes.  
% By progressively masking each gene's own expression input during flow matching on the generated ST profiles, the model is encouraged to exploit cross-gene relationships rather than rely on independent per-gene prediction, as illustrated in Fig.~\ref{fig:comparison} (Ours).
First, we propose \emph{annealed masked flow matching}, where subsets of genes are progressively masked according to the flow timestep $t$ and must be inferred from the remaining genes. As the masking ratio increases with $t$, the model is increasingly encouraged to exploit cross-gene relationships rather than rely on independent per-gene prediction, as illustrated in Fig.~\ref{fig:comparison} (Ours).
% \textcolor{red}{[Xiaohan: You may need to mention the annealed masking strategy here, as it is the novelty of this part. Otherwise, the masking seems to be too simplified.]}
Second, we introduce a \emph{gene graph-regularized optimization} scheme to explicitly incorporate gene-gene correlations. We construct a gene affinity graph from two complementary sources:  
the STRING database~\cite{szklarczyk2025string}, which provides functional associations between genes, and WGCNA~\cite{rezaie2023pywgcna}, which estimates data-driven gene co-expression.
The fused graph is then used to regularize the predicted expression with two terms: a Huber-based term for local consistency and a Laplacian term for global smoothness. Together, these components constrain the output space and improve the biological coherence of the generated gene expression. The contributions of this work are summarized as:

\begin{itemize}[leftmargin=*]
    % \item We identify a fundamental limitation of current generative models for ST prediction: genes are predicted independently without explicitly modeling gene–gene interactions, limiting the ability to capture biologically meaningful co-expression structure.
    \item We identify a key limitation of existing generative models for ST prediction, their lack of explicit modeling of intrinsic gene-gene correlations, and propose \ourmodel, a conditional flow matching framework that explicitly models gene dependencies for histology-to-ST prediction.
    \item We introduce an \textit{annealed masking strategy} conditioned on the timestep $t$, which progressively shifts the learning objective from per-gene marginal estimation to joint conditional modeling.
    % \item We integrate external functional prior and data-driven co-expression to construct a gene affinity graph, and incorporate it into the optimization via Huber-based local consistency and Laplacian-based global smoothness regularization, leading to more biologically coherent ST predictions.
    \item We introduce a \textit{gene graph-regularized optimization} scheme that integrates prior biological knowledge and data-driven co-expression graphs to explicitly enforce structured gene dependencies during ST generation.
    % \item Across 12 ST datasets, \ourmodel~improves both per-gene accuracy and gene-gene correlation fidelity, and recovers co-expression structure that prior methods systematically destroy.
    % \item Across 12 ST datasets, \ourmodel~achieves the best gene-wise PCC among SOTA methods and recovers co-expression structure, demonstrating the effectiveness of dependency-aware flow matching for histology-conditioned ST generation.
    % \item Across 12 ST datasets, \ourmodel~achieves state-of-the-art gene-wise PCC and gene correlation preservation, highlighting the importance of gene dependency modeling in ST prediction.
    \item Across 12 ST datasets, \ourmodel~achieves the best average PCC and HPCC among evaluated methods, highlighting the importance of gene dependency modeling in ST prediction.

    % \item We propose \textbf{\ourmodel}, a conditional flow matching framework that incorporates gene-gene interactions via two complementary modules: \emph{annealed masked flow matching} for cross-gene contextual learning and \emph{gene graph-regularized optimization} for enforcing biologically coherent dependencies.

\end{itemize}

\section{Related Work}
\label{sec:relatedwork}

% Our work relates to two lines of research: spatial gene generation and gene-gene structure modeling.

\paragraph{Histology-to-ST Prediction.}

Early work on predicting ST from histology largely formulated the task as supervised regression from image patches to spot-level expression values, optionally combined with spatial coordinates. ST-Net~\cite{he2020integrating} first demonstrated that spatial gene expression can be predicted from histology. Subsequent regression-based methods enhanced this pipeline with stronger spatial and contextual modeling to capture local and long-range dependencies~\cite{zeng2022spatial, pang2021leveraging, chung2024accurate, wang2025m2ost}. However, regression-based approaches typically produce deterministic point estimates and fail to capture the inherent uncertainty and one-to-many nature of gene expression. Retrieval-based methods~\cite{xie2023spatially, yang2023exemplar,wang2025fmh2st, hu2026histoprism} instead align histology and expression in a shared embedding space, and predict ST by retrieving similar patches from a reference set. However, their performance depends heavily on the diversity and coverage of the reference set. 

More recently, there has been a shift toward generative modeling, motivated by its ability to capture uncertainty and the inherent one-to-many nature of gene expression. For example, Stem~\cite{ICLR2025_31cc93d1} formulates expression prediction as conditional diffusion, and STFlow~\cite{pmlr-v267-huang25t} as conditional flow matching over whole-slide spot collections, jointly modeling expression across spots with improved sampling efficiency. GenAR~\cite{ouyang2025genar} generates expression autoregressively over gene groups, and STPath~\cite{huang2025stpath} adopts mask-based generative pretraining.
% However, across regression, retrieval, and generative formulations, gene-gene correlations are not explicitly modeled, as existing approaches primarily focus on per-gene prediction or marginal distributions. This limitation is consistent with the formulation of current training objectives and motivates our work to explicitly incorporate gene-gene interactions into generative modeling.
However, existing methods primarily optimize per-gene predictions or marginal distributions without explicitly modeling gene-gene dependencies, limiting their ability to preserve biologically meaningful co-expression structure. This limitation motivates our dependency-aware generative framework for ST prediction.

\paragraph{Gene-Gene Interaction Modeling.} Gene expression is organized by co-expression modules, regulatory programs, and interaction networks. Gene-gene interactions can be modeled using either prior-based or data-driven approaches.
For instance, WGCNA~\cite{rezaie2023pywgcna} constructs gene co-expression networks in a data-driven manner by identifying modules of highly correlated genes. STRING~\cite{szklarczyk2025string} provides curated and predicted protein-protein interaction networks that capture functional associations between genes.
Incorporating such structures has been shown to improve molecular prediction and interpretation~\cite{shi2025multi}. 
This idea is especially appealing for histology-to-ST generation, where biologically plausible outputs should preserve meaningful cross-gene structure rather than merely optimize per-gene accuracy~\cite{hu2026histoprism}. However, such priors are typically applied in regression models or feature encoders, and are rarely integrated into the dynamics of conditional generative learning. 
Our work takes a step toward incorporating structured gene dependencies into conditional generative frameworks, enabling improved biological fidelity and coherence.

\section{Methodology}
\label{sec:method}

% \textcolor{red}{[Xiaohan: Please add a paragraph here to introduce the framework. As shown in Fig. 2, xxx, provide a brief overview for the whole pipeline]}

% Motivated by the per-gene decomposition limitation identified in Section~\ref{sec:theory}, we introduce the framework illustrated in Fig.~\ref{fig:framework}, which generates spatial gene expression through two complementary components: \textit{Annealed Masked Flow Matching} (Section~\ref{sec:mfm}) masks gene dimensions in the input to enforce joint cross-gene modeling, while a \textit{Gene Graph-Regularized Optimization} prior (Section~\ref{sec:genegraph}) regularizes predictions toward biologically coherent structure.

% \subsection{Overview.}
Motivated by the per-gene decomposition limitation identified in Section~\ref{sec:theory}, we introduce \ourmodel, a dependency-aware conditional flow matching framework for histology-to-ST prediction. As shown in Fig.~\ref{fig:framework}, \ourmodel~combines \textit{Annealed Masked Flow Matching} (Section~\ref{sec:mfm}) and \textit{Gene Graph-Regularized Optimization} (Section~\ref{sec:genegraph}) to encourage cross-gene modeling. 
Given histology features $z$, spatial coordinates $q$, timestep $t$, and the interpolant $x_t=(1-t)x_0+t x_1$ between a zero-inflated negative binomial (ZINB) prior sample $x_0$ and ground-truth expression $x_1$, we obtain the masked input $\hat{x}_t$ using the timestep-dependent masking ratio $p(t)=p_{\max}\cdot t$. A spatial-transformer denoiser~\cite{pmlr-v267-huang25t} then predicts the target expression from $(z,q,t,\hat{x}_t)$ and is optimized with both reconstruction and gene graph-based regularization.

\subsection{Theoretical Background}
\label{sec:theory}

% We begin by formalizing a structural limitation of standard flow matching~\cite{lipman2023flow, albergo2022building, liu2022flow} (and, by extension, score-based diffusion) when applied to multi-gene expression generation. The results in this section are stated for flow matching; analogous statements hold for score-based diffusion via the connection between velocity fields and score functions \cite{lipman2023flow}.

We begin by formalizing a structural limitation of standard flow matching~\cite{lipman2023flow, albergo2022building, liu2022flow} when applied to multi-gene expression generation. 
% Analogous conclusions hold for score-based diffusion through the connection between velocity fields and score functions~\cite{lipman2023flow}.

\textbf{Setup and Notation.}  
Let $x_1 \in \mathbb{R}^G$ denote a gene expression vector with $G$ genes at a single spatial spot, and let $c$ denote the conditioning information (histology features, spatial coordinates).
In conditional flow matching (CFM), we draw a prior sample $x_0 \sim p_0$ and form the linear interpolant
\begin{equation}
  x_t = (1-t)\,x_0 + t\,x_1, \qquad t \in [0,1].
  \label{eq:interpolant}
\end{equation}
A velocity network $v_\theta(x_t, t, c) \in \mathbb{R}^G$ is trained to predict the conditional velocity $u_t = x_1 - x_0$ via
\begin{equation}
  \mathcal{L}_{\mathrm{CFM}}(\theta)
  = \mathbb{E}_{t,\,x_0,\,x_1,\,c}
    \bigl\|
      v_\theta(x_t, t, c) - (x_1 - x_0)
    \bigr\|^2.
  \label{eq:cfm_loss}
\end{equation}
In practice, many methods predict endpoint $x_1$ directly
rather than the velocity; the two formulations are equivalent up to the affine relation $x_1 = x_0 + u_t$. We use the endpoint-prediction form below.

\begin{figure}[t]
    \centering
    \includegraphics[width=\textwidth]{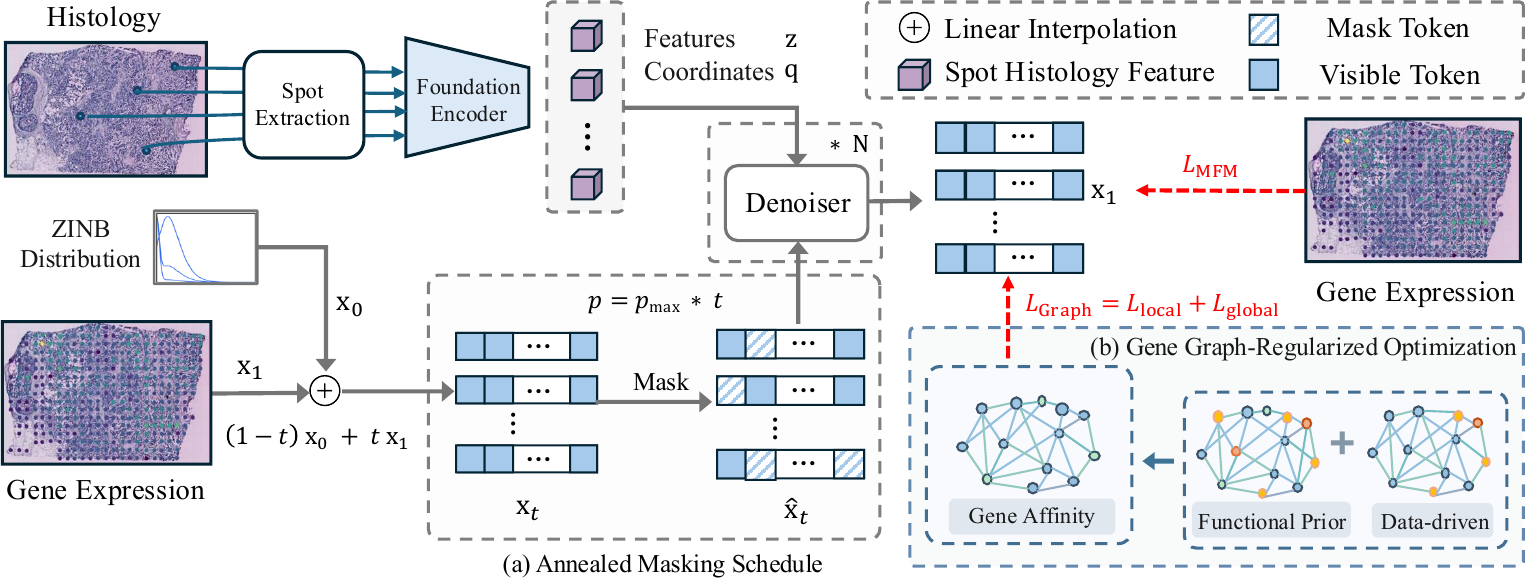}
    \caption{\textbf{Overview of the proposed \ourmodel.} We model histology-to-ST inference as conditional flow matching conditioned on histology features and spatial coordinates. To promote gene-dependent generation, we introduce (a) an \textit{annealed masking schedule} conditioned on timestep t and (b) a \textit{gene graph-regularized optimization} strategy, which constructs gene affinity by fusing an external functional prior with data-driven co-expression, and imposes a Huber-based \emph{local smoothness} and a Laplacian \emph{global smoothness} to improve the biological coherence of generated expression.}
    \label{fig:framework}
\end{figure}

\begin{proposition}[Per-gene decomposition of the standard objective]
\label{prop:decomp}
  Let $v_\theta(x_t, t, c) = (v_\theta^1, \dots, v_\theta^G)$ denote
  the $G$-dimensional output of the velocity network.
  The  objective~\eqref{eq:cfm_loss} decomposes
  as
  \begin{equation}
    \mathcal{L}_{\mathrm{CFM}}(\theta)
    = \sum_{g=1}^{G}
      \mathbb{E}_{t,\,x_0,\,x_1,\,c}
      \bigl(
        v_\theta^g(x_t, t, c) - u_t^g
      \bigr)^2
    = \sum_{g=1}^{G} \mathcal{L}_{\mathrm{CFM}}^{(g)}(\theta),
    \label{eq:decomp}
  \end{equation}
  where $u_t^g = x_1^g - x_0^g$ is the $g$-th component of the target velocity.
  Consequently, the optimal velocity field satisfies, for each gene
  $g \in \{1,\dots,G\}$ independently,
  \begin{equation}
    v^{*,g}(x_t, t, c)
    = \mathbb{E}\bigl[u_t^g \mid x_t, t, c\bigr]
    = \mathbb{E}\bigl[x_1^g \mid x_t, t, c\bigr] - x_0^g.
    \label{eq:optimal_gene}
  \end{equation}
\end{proposition}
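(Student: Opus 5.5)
The plan is to first establish the additive decomposition~\eqref{eq:decomp} and then minimize each summand pointwise. For the decomposition, I expand the squared Euclidean norm in~\eqref{eq:cfm_loss} coordinatewise:
\[
\bigl\|v_\theta(x_t,t,c)-u_t\bigr\|^2=\sum_{g=1}^G\bigl(v_\theta^g(x_t,t,c)-u_t^g\bigr)^2 .
\]
I then exchange the finite sum with the expectation over $(t,x_0,x_1,c)$ by linearity. This is justified because every term is nonnegative, so Tonelli applies even without integrability assumptions. The result is exactly $\sum_g \mathcal{L}_{\mathrm{CFM}}^{(g)}(\theta)$.

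For the optimality statement, I treat the velocity field nonparametrically, as an arbitrary measurable map $v=(v^1,\dots,v^G)$ of $(x_t,t,c)$. The $g$-th summand depends only on the component function $v^g$. Hence minimizing the total objective is equivalent to minimizing each $\mathcal{L}^{(g)}$ separately over $v^g$, which is the sense of ``independently'' in the statement.

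Each $\mathcal{L}^{(g)}$ is a standard $L^2$ regression problem. I condition on $(x_t,t,c)$ via the tower property and use the bias--variance identity
\[
\mathbb{E}\bigl[(v^g-u_t^g)^2\bigr]=\mathbb{E}\bigl[(v^g-\mathbb{E}[u_t^g\mid x_t,t,c])^2\bigr]+\mathbb{E}\bigl[\mathrm{Var}(u_t^g\mid x_t,t,c)\bigr].
\]
The first term vanishes exactly when $v^{g}=\mathbb{E}[u_t^g\mid x_t,t,c]$ almost surely, and this gives the first equality in~\eqref{eq:optimal_gene}. This step requires $u_t^g$ to have a finite second moment, i.e.\ $\mathbb{E}|x_0|^2,\mathbb{E}|x_1|^2<\infty$, which I will state as a standing assumption.

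The main obstacle is the second equality in~\eqref{eq:optimal_gene}. Since $x_0$ is not in general measurable with respect to $(x_t,t,c)$, linearity of conditional expectation only gives
\[
\mathbb{E}[u_t^g\mid x_t,t,c]=\mathbb{E}[x_1^g\mid x_t,t,c]-\mathbb{E}[x_0^g\mid x_t,t,c].
\]
The paper's ``$-x_0^g$'' must therefore be read as the conditional mean $\mathbb{E}[x_0^g\mid x_t,t,c]$. Equivalently, it is the endpoint-prediction form obtained by substituting the interpolant: from~\eqref{eq:interpolant}, $x_0=(x_t-t x_1)/(1-t)$ for $t<1$. This yields
\[
\mathbb{E}[u_t^g\mid x_t,t,c]=\frac{\mathbb{E}[x_1^g\mid x_t,t,c]-x_t^g}{1-t},
\]
which is a function of $(x_t,t,c)$ only. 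I will present this version to make the identity rigorous, and remark that it is precisely the affine endpoint/velocity correspondence mentioned before the proposition.

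Finally, I will note the interpretive point that motivates the section. Although each optimal component $v^{*,g}$ is a function of the full vector $x_t$, the objective assigns no term coupling different $g$. Any cross-gene dependence enters only through the conditional expectation, not through an explicit incentive in the loss.
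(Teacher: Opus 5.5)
Your proposal follows the paper's own argument: expand the squared norm coordinatewise, use linearity of expectation, and minimize each per-gene term pointwise by the conditional mean; the Tonelli and bias--variance details only add rigor. Your handling of the second equality in \eqref{eq:optimal_gene} is also correct and fills a point the paper's proof never addresses: since $x_0$ is not in general a function of $(x_t,t,c)$, that equality holds only with $x_0^g$ replaced by $\mathbb{E}[x_0^g\mid x_t,t,c]$, which is equivalent to your endpoint form $\bigl(\mathbb{E}[x_1^g\mid x_t,t,c]-x_t^g\bigr)/(1-t)$.
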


 \begin{proof}
  The squared $\ell_2$ norm decomposes over coordinates:
  $\|v_\theta - u_t\|^2 = \sum_g (v_\theta^g - u_t^g)^2$.
  Because expectation is linear, $\mathcal{L}_{\mathrm{CFM}}$ is the sum of $G$ independent per-gene losses.
  Minimizing each $\mathcal{L}_{\mathrm{CFM}}^{(g)}$ pointwise yields $v^{*,g} = \mathbb{E}[u_t^g \mid x_t, t, c]$, which depends on $x_1^g$ only through its marginal conditional distribution given $(x_t, t, c)$.
\end{proof}

\begin{remark}[Extension to diffusion models]
  The same decomposition applies to score-based diffusion.
  The denoising score matching objective
  $\mathbb{E}\|s_\theta(x_t,t,c) - \nabla_{x_t}\log p_t(x_t|x_1)\|^2$ decomposes identically because the Gaussian transition kernel factorizes over coordinates and the squared norm decomposes over coordinates.
\end{remark}

Proposition~\ref{prop:decomp} does not claim that the network \emph{cannot} represent cross-gene dependencies; a shared backbone with sufficient capacity can, in principle, share intermediate representations across genes.
  The proposition states something fundamental: the \emph{training objective} never rewards such sharing.
  Any improvement in gene $g$'s prediction that arises from modeling its correlation with gene $j$ is equally achievable by a predictor that ignores gene $j$ entirely, because the loss for gene $g$ depends only on the marginal $p(x_1^g \mid x_t, t, c)$.
  Cross-gene coordination is therefore \emph{unidentifiable} under the standard objective that it may emerge incidentally but is never incentivized.

\subsection{Annealed Masked Flow Matching}
\label{sec:mfm}
% \textcolor{red}{[Xiaohan: The module name seems to be too broad and cannot reflect the novelty of annealed masking scheme. Maybe change to Annealed Masked Flow Matching?]}

The decomposition in Proposition~\ref{prop:decomp} arises because every gene dimension of the target $u_t$ has a corresponding gene dimension in the input $x_t$, allowing each gene to be predicted independently. 
We proposed \textit{annealed masked flow matching} (Annealed-MFM) to break this correspondence: at each step, a subset of gene dimensions is withheld from the input, requiring the model to infer the missing genes from the remaining ones.
We show that this simple modification provably shifts the optimal solution from per-gene marginals to conditional joint distributions.

\smallskip\textbf{Formulation.}
Let $\mathcal{M} \subseteq \{1,\dots,G\}$ denote a mask set sampled from a distribution $q(\mathcal{M})$ at each training step, and let $\mathcal{O} = \{1,\dots,G\} \setminus \mathcal{M}$.
The model receives a partially masked input $\tilde{x}_t$ in which the entries for $\mathcal{M}$ are replaced by learnable gene-wise mask tokens $m \in \mathbb{R}^G$, shared across spatial locations:
\begin{equation}
  \tilde{x}_{t,g}
  = \begin{cases}
      m_g, & g \in \mathcal{M}, \\
      x_{t,g}, & g \in \mathcal{O}.
    \end{cases}
  \label{eq:mask_op}
\end{equation}
% The masked flow matching objective over the masked genes is
% \begin{equation}
%   \mathcal{L}_{\mathrm{MFM}}(\theta; \mathcal{M})
%   = \mathbb{E}_{t,\,x_0,\,x_1,\,c}
%     \sum_{g \in \mathcal{M}}
%     \bigl(
%       v_\theta^g(\tilde{x}_t, t, c) - u_t^g
%     \bigr)^2.
%   \label{eq:masked_loss}
% \end{equation}

Crucially, masking operates exclusively on the \emph{input} $x_t \to \tilde{x}_t$; the \emph{target} $u_t$ and the \emph{supervision signal} remain unchanged across all $G$ genes.
The Annealed-MFM objective is therefore
\begin{equation}
  \mathcal{L}_{\mathrm{MFM}}(\theta; \mathcal{M})
  = \mathbb{E}_{t,\,x_0,\,x_1,\,c}\;
    \sum_{g=1}^{G}
    \bigl(
      v_\theta^g(\tilde{x}_t, t, c) - u_t^g
    \bigr)^2.
  \label{eq:masked_loss}
\end{equation}

% No masking is applied at inference time; the model receives the full
% $x_t$ and generates expression through standard ODE integration.

% \smallskip\textbf{Asymmetry between masked and observed genes.}
% Although the loss supervises all $G$ genes equally, the nature of the
% prediction task differs between the two groups.
% For observed genes $g \in \mathcal{O}$, the model retains access to
% $x_{t,g}$ and the prediction task resembles standard flow matching,
% albeit from a partially corrupted context. 
% For masked genes $g \in \mathcal{M}$, the input carries no
% information about $x_{t,g}$; the model must infer $u_t^g$ entirely
% from the observed genes, the conditioning $c$, and whatever
% inter-gene structure it has learned.  

% It is this asymmetry that shifts the optimal solution for the masked genes from per-gene marginals to the joint conditional, as we formalize next.

\begin{proposition}[Joint conditional modeling under masking]
\label{prop:joint} 
Although the loss supervises all $G$ genes equally, the nature of the prediction task differs between the two groups, producing the asymmetry that shifts the optimal solution for the masked genes from per-gene marginals to the joint conditional. 
  Under the masked objective~\eqref{eq:masked_loss}, the optimal predictor for \emph{every} gene $g \in \{1,\dots,G\}$ satisfies
  \begin{equation}
    v^{*,g}(\tilde{x}_t, t, c)
    = \mathbb{E}\bigl[
        u_t^g
        \;\big|\;
        x_t^{(\mathcal{O})},\, t,\, c
      \bigr].
    \label{eq:optimal_masked}
  \end{equation}
  For observed genes $g \in \mathcal{O}$, the conditioning set $x_t^{(\mathcal{O})}$ includes $x_{t,g}$ itself, so the predictor retains direct information about $x_{1,g}$ and behaves similarly to standard flow matching.
  For masked genes $g \in \mathcal{M}$, the conditioning set contains {no} information about $x_{t,g}$; the optimal prediction instead depends on the \textbf{joint conditional distribution} $p(x_1^{(\mathcal{M})} \mid x_1^{(\mathcal{O})}, c)$. If genes in $\mathcal{M}$ are correlated with those in $\mathcal{O}$, the optimal prediction for any masked gene $g$ depends on the co-variation structure between the two sets.
\end{proposition}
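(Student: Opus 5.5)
The plan mirrors the proof of Proposition~\ref{prop:decomp}, with the masked input $\tilde{x}_t$ playing the role of $x_t$.

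\textbf{Step 1: decomposition and the regression fact.} Fix the mask set $\mathcal{M}$; the case of random $\mathcal{M}\sim q(\mathcal{M})$ is handled by also conditioning on $\mathcal{M}$. The squared norm in~\eqref{eq:masked_loss} splits over coordinates, exactly as in~\eqref{eq:decomp}. Each per-gene term $\mathbb{E}\,(v^g_\theta(\tilde{x}_t,t,c)-u^g_t)^2$ is then a least-squares regression of $u^g_t$ on the network input $(\tilde{x}_t,t,c)$. Over all measurable predictors, its minimiser is $\mathbb{E}[u_t^g\mid \tilde{x}_t,t,c]$. This follows from the standard orthogonality decomposition $\mathbb{E}(f-u)^2=\mathbb{E}(f-\mathbb{E}[u\mid\cdot])^2+\mathbb{E}\,\mathrm{Var}(u\mid\cdot)$, assuming $u_t^g$ has finite second moment.

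\textbf{Step 2: identify the sigma-algebra of $\tilde{x}_t$.} By~\eqref{eq:mask_op}, the coordinates of $\tilde{x}_t$ indexed by $\mathcal{M}$ equal the deterministic learnable token $m_g$. The coordinates indexed by $\mathcal{O}$ equal $x_t^{(\mathcal{O})}$. Hence, given $\mathcal{M}$, the input $\tilde{x}_t$ is a bijective measurable function of $x_t^{(\mathcal{O})}$, so $\sigma(\tilde{x}_t,t,c)=\sigma(x_t^{(\mathcal{O})},t,c)$. This gives~\eqref{eq:optimal_masked}.

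This is the step I expect to be the main obstacle, because it needs care in two places.
\begin{itemize}
\item \emph{Mask token collisions.} I would first argue that the mask token carries no information. The value $m_g$ almost surely coincides with no realised $x_{t,g}$ when $p_t$ is continuous. Even if it did, the identity of $\mathcal{M}$ is available to the optimal predictor once we condition on $\mathcal{M}$. So I would state the result conditional on $\mathcal{M}$.
\item \emph{Random masks.} When $\mathcal{M}$ is random and independent of $(x_0,x_1,c,t)$, conditioning on it does not alter the joint law of the remaining variables.
\end{itemize}

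\textbf{Step 3: the two cases.} For $g\in\mathcal{O}$, write $u_t^g=(x_{1}^g-x_{t,g})/(1-t)$. Since $x_{t,g}$ is in the conditioning set, the optimum is $\bigl(\mathbb{E}[x_1^g\mid x_t^{(\mathcal{O})},t,c]-x_{t,g}\bigr)/(1-t)$, which has the form of~\eqref{eq:optimal_gene}.

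For $g\in\mathcal{M}$, decompose the velocity as $u_t^g=x_1^g-x_0^g$. By the tower property over $x_1^{(\mathcal{O})}$, we have
\[
\mathbb{E}[x_1^g\mid x_t^{(\mathcal{O})},t,c]=\mathbb{E}\bigl[\,\mathbb{E}[x_1^g\mid x_1^{(\mathcal{O})},x_t^{(\mathcal{O})},t,c]\;\big|\;x_t^{(\mathcal{O})},t,c\bigr].
\]
Assume the prior factorises across genes and is independent of $x_1$ given $c$; for example, the ZINB prior is taken coordinatewise. Then $x_t^{(\mathcal{O})}$ is conditionally independent of $x_1^g$ given $(x_1^{(\mathcal{O})},c)$. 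So the inner expectation reduces to $\mathbb{E}[x_1^g\mid x_1^{(\mathcal{O})},c]$, which is a functional of the joint conditional $p(x_1^{(\mathcal{M})}\mid x_1^{(\mathcal{O})},c)$. Similarly, $\mathbb{E}[x_0^g\mid x_t^{(\mathcal{O})},t,c]=\mathbb{E}[x_0^g\mid c]$ by independence.

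\textbf{Step 4: dependence on co-variation.} I would exhibit the dependence concretely with a Gaussian example, where $x_1\mid c$ has covariance $\Sigma$. There the inner expectation equals $\mu_g+\Sigma_{g\mathcal{O}}\Sigma_{\mathcal{O}\mathcal{O}}^{-1}(x_1^{(\mathcal{O})}-\mu_{\mathcal{O}})$. This depends on $x_t^{(\mathcal{O})}$ if and only if $\Sigma_{g\mathcal{O}}\neq0$, which makes the role of the co-variation structure explicit.
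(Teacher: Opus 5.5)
Your proposal is correct. Steps 1--2 are the paper's argument made rigorous. The appendix proof only asserts that the masked coordinates of $\tilde{x}_t$ are fixed tokens with no sample-specific information and then reads off \eqref{eq:optimal_masked}. You supply the $L^2$ orthogonality decomposition and the identity $\sigma(\tilde{x}_t,t,c)=\sigma(x_t^{(\mathcal{O})},t,c)$ for fixed $\mathcal{M}$.

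The routes genuinely diverge for $g\in\mathcal{M}$. The paper applies the law of total expectation once, writing the optimum as an integral of $u_t^g$ against $p(x_1^{(\mathcal{M})}\mid x_t^{(\mathcal{O})},t,c)$. It then argues informally that this integral requires the co-variation of $x_1^{(\mathcal{M})}$ with $x_1^{(\mathcal{O})}$. It never passes from the noisy conditioning $x_t^{(\mathcal{O})}$ to the clean conditional $p(x_1^{(\mathcal{M})}\mid x_1^{(\mathcal{O})},c)$ that the statement names, and it leaves the $x_0^g$ part of $u_t^g$ unaccounted for. You instead split $u_t^g=x_1^g-x_0^g$ and insert $x_1^{(\mathcal{O})}$ by the tower property. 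Independence of the prior from $x_1$ given $c$, plus gene-wise factorisation for the $x_0^g$ term, then gives the optimum as $\mathbb{E}\bigl[\mathbb{E}[x_1^g\mid x_1^{(\mathcal{O})},c]\,\big|\,x_t^{(\mathcal{O})},t,c\bigr]-\mathbb{E}[x_0^g\mid c]$. This is the regression function of the joint conditional, averaged over the posterior of $x_1^{(\mathcal{O})}$, which is what the statement actually claims. Your Gaussian example also turns its last, informal sentence into a checkable criterion.

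The cost is that you must state assumptions on $p_0$ that the paper uses only tacitly; they are natural for a gene-wise ZINB prior. What you gain is a proof rather than a heuristic. Your observed-gene formula $(\mathbb{E}[x_1^g\mid x_t^{(\mathcal{O})},t,c]-x_{t,g})/(1-t)$ is also the correct form of \eqref{eq:optimal_gene}, where $x_0^g$ should really be its conditional expectation.

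Three small repairs are needed.
\begin{enumerate}
\item Under the annealed schedule, $\mathcal{M}$ is not independent of $t$, since each gene is masked with probability $p_{\max}t$. What holds, and what you need, is $\mathcal{M}\perp(x_0,x_1,c)\mid t$; this suffices because $t$ is in every conditioning set.
\item The continuity premise in your collision remark fails here: the ZINB prior is discrete with an atom at $0$, expression is zero-inflated, and $m$ is initialised at $0$. The premise is also unnecessary. For fixed $\mathcal{M}$, which is how $\mathcal{L}_{\mathrm{MFM}}(\theta;\mathcal{M})$ is written, the masked coordinates are constants and your $\sigma$-algebra identity holds outright. For a single network trained over random masks without access to $\mathcal{M}$, the optimum is $\mathbb{E}[u_t^g\mid\tilde{x}_t,t,c]$ and collisions do matter. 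That is a different statement, and you should not claim it.
\item In Step 4, $\mu_g+\Sigma_{g\mathcal{O}}\Sigma_{\mathcal{O}\mathcal{O}}^{-1}(x_1^{(\mathcal{O})}-\mu_{\mathcal{O}})$ is a function of $x_1^{(\mathcal{O})}$, not of $x_t^{(\mathcal{O})}$. The ``if'' direction therefore needs the outer posterior average to be non-constant. For $t\in(0,1]$ you can get this from the Tweedie-type identity $\mathbb{E}[x_1^{(\mathcal{O})}\mid x_t^{(\mathcal{O})}=y,t,c]=\mu_{\mathcal{O}}-t\,\Sigma_{\mathcal{O}\mathcal{O}}\nabla\log p_t(y)$, valid for any prior independent of $x_1$. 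It makes the optimum equal to $\mu_g-t\,\Sigma_{g\mathcal{O}}\nabla\log p_t(y)-\mathbb{E}[x_0^g\mid c]$. This is constant only if $p_t(y+s\Sigma_{\mathcal{O}g})=p_t(y)e^{ks}$ for all $s$, which is impossible for an integrable, everywhere-positive density.
\end{enumerate}
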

We provide the proof of Proposition~\ref{prop:joint} in the Appendix Section~\ref{sec:proof_proposition2}.

\subsubsection{Implementation Details}
\label{sec:mask_schedule}

\smallskip\textbf{Annealed masking schedule.}
Conditional flow matching suffers from a practical shortcut issue: as $t \to 1$, the interpolant $x_t \to x_1$, allowing the network to shortcut by copying near-target values rather than learning meaningful transport dynamics.
We suppress this by coupling the masking ratio to the timestep:
\begin{equation}
  p(t) = p_{\max} \cdot t,
  \label{eq:mask_rate}
\end{equation}
where $p_{\max} \in (0,1)$.
At small $t$, $x_t$ is far from $x_1$ and little masking is needed ($\tilde{x}_t \approx x_t$); at large $t$, heavy masking removes the shortcut signal ($\tilde{x}_t \approx m$), forcing the network to reconstruct $u_t$ from cross-gene context rather than copying.
In practice, we sample $t \sim \mathrm{Uniform}(0,1)$ and draw a Bernoulli mask independently per gene with probability $p(t)$. 
% We set $p_{\max} = 0.75$ as the default.

\smallskip\textbf{Learnable mask tokens.}
A natural masking strategy is to replace masked genes with a fixed zero token ($m_g = 0$). However, this introduces a systematic shift in input magnitude: during training, a fraction $p(t)$ of entries are zeroed, reducing the average input norm by a factor of approximately $1 - p(t)$. At inference, no masking is applied and all entries carry real values, resulting in a train-inference magnitude mismatch analogous to the dropout scaling problem~\citep{srivastava2014dropout}.

To mitigate this issue, we introduce a learnable mask token $m \in \mathbb{R}^G$, initialized to zero and optimized via backpropagation. Rather than acting as a fixed placeholder, the token adapts to preserve the input statistics expected by the network across different masking ratios. 
In effect, the network jointly learns \emph{what} to substitute for missing genes (the token value) and \emph{how} to use the remaining genes (the velocity predictor), enabling stable optimization without explicit rescaling during inference.
% adapting both to maintain consistent internal activation scales across masking ratios. Thus, we maintain clean inference with no masking required. 
We provide the convergence analysis of Annealed-MFM in the Appendix Section~\ref{sec:convergence}.

% \subsubsection{Inference Scheduler}

% ============================================================
%  BLOCK 2: Gene Affinity Graph
% ============================================================
 
\subsection{Gene Graph-Regularized Optimization}
\label{sec:genegraph}
% Masked flow matching addresses the \emph{optimization} side of Proposition~\ref{prop:decomp}: it restructures the training signal to make joint modeling necessary for optimality. Yet finite model capacity and noisy gradients can still produce predictions that violate known co-expression structure. We close this gap with a complementary mechanism that imposes constraints from gene-gene correlations and gene priors, guiding learning toward biologically coherent outputs. To this end, we build an explicit gene affinity graph capturing functional coupling between genes, which serves both as a scaffold for structured masking (Section~\ref{sec:structuremask}) and as a prior for output regularization (Section~\ref{sec:graphloss}).

%%% xiaohan revised
Annealed-MFM addresses the \emph{optimization} limitation identified in Proposition~\ref{prop:decomp} by restructuring the training signal to encourage joint modeling. However, finite model capacity and noisy gradients can still lead to predictions that violate known co-expression structures. 
To address this, we introduce a complementary mechanism that imposes constraints based on external functional prior and data-driven co-expression, guiding the model toward biologically coherent outputs. 
% Specifically, we construct an explicit gene affinity graph capturing functional coupling between genes, which serves both as a scaffold for structured masking (Section~\ref{sec:structuremask}) and as a prior for output regularization (Section~\ref{sec:graphloss}).

% Masked flow matching addresses the \emph{optimization} side of the limitation identified in Proposition~\ref{prop:decomp}: it restructures the training signal so that joint modeling becomes necessary for optimality. However, even with a richer training signal, finite model capacity and noisy gradients can still yield predictions that violate known co-expression structure. We therefore introduce a complementary mechanism that addresses this remaining gap by imposing constraints derived from real gene--gene correlations and gene priors, thereby guiding learning toward the correct mapping and biologically coherent configurations. This motivates building an explicit gene affinity graph that captures
% which genes are functionally coupled, serving both as a scaffold
% for structured masking (Section~\ref{sec:structuremask}) and as a prior for output regularization (Section~\ref{sec:graphloss}).

\subsubsection{Gene Affinity Graph Construction}
\label{sec:graph_construct}

We construct a gene affinity graph $\mathcal{G} = (\mathcal{V}, \mathcal{E}, S)$ over the target gene set, where $\mathcal{V}$ is the set of genes, $\mathcal{E}$ is the set of edges connecting functionally related genes, and $S \in \mathbb{R}^{G \times G}$ is the affinity matrix encoding edge weights. The graph fuses two complementary sources: STRING~\cite{szklarczyk2025string} captures functional gene-gene associations from existing databases, providing a stable prior that generalizes across datasets; we denote its affinity matrix by $S^{(\mathrm{s})}$. WGCNA-derived graph~\cite{rezaie2023pywgcna} computed from the training expression data, capturing the co-expression landscape specific to the current dataset; we denote its affinity matrix by $S^{(\mathrm{w})}$. 
% Both sources capture higher-order structure beyond pairwise correlation, linking genes that share correlated neighbors. 
We fuse them via weighted averaging:
\begin{equation}
  S = \alpha\, S^{(\mathrm{s})}
    + (1-\alpha)\, S^{(\mathrm{w})},
  \label{eq:affinity}
\end{equation}
where $S^{(\mathrm{s})}, S^{(\mathrm{w})} \in \mathbb{R}^{G \times G}$, and $\alpha \in [0,1]$ controls the balance.
The fused matrix is sparsified by retaining the top-$k$ neighbors per gene and symmetrized to yield the final edge set $\mathcal{E}$. From $S$ we derive the normalized adjacency and graph Laplacian:
\begin{equation}
  L = I - A,
   \qquad
  A = D^{-1/2} S\, D^{-1/2},
  % \qquad
  % D_{ii} = \textstyle\sum_j S_{ij}.
  \label{eq:graph_ops}
\end{equation}
where $D$ is the diagonal degree matrix with $D_{ii}=\sum_j S_{ij}$, $A$ is the symmetrically normalized adjacency matrix, and $L$ is the corresponding normalized graph Laplacian.

\subsubsection{Graph-Regularized Objective}
\label{sec:graphloss}

% Graph-structured masking reshapes the training signal to incentivize
% joint modeling (Section~\ref{sec:structuremask}), but it does not
% directly constrain the \emph{output} of the model: a network that
% learns inter-gene dependencies may still produce predictions that
% violate co-expression structure due to finite capacity or noisy
% gradients.

We use the gene affinity graph $\mathcal{G}$ to regularize the predicted expression, treating co-expression structure as a soft prior on valid outputs.  
The overall objective for graph constraints $\mathcal{L}_{\rm graph}$ combines two complementary regularizers operating at local and global scales.

\smallskip\textbf{Local graph constraint.}
Connected genes in $\mathcal{G}$ should produce similar predictions. 
% Since genes operate at different scales, we first standardize the prediction of each gene $i$ to zero mean and unit variance using running EMA statistics ($\mu_i$, $\sigma_i$), a common strategy for stabilizing normalization statistics~\cite{}:
Since genes operate at different scales, we z-score normalize each gene $i$ prediction using running EMA estimates of its mean and standard deviation ($\mu_i$, $\sigma_i$)~\cite{ioffe2015batch}: \(
z_i = (\hat{x}_{1,i} - \mu_i)/\sigma_i.\)
% \begin{equation}
%   z_i = (\hat{x}_{1,i} - \mu_i)\,/\,\sigma_i.
%   \label{eq:standardize}
% \end{equation}
For each edge $(i,j) \in \mathcal{E}$, we penalize the discrepancy $\Delta_{n,ij} = z_{ni} - z_{nj}$ at every spot $n$:
\begin{equation}
  \mathcal{L}_{\mathrm{local}} 
  = \frac{1}{N}
    \sum_{n=1}^{N}
    \sum_{(i,j) \in \mathcal{E}}
    w_{ij}\;\mathrm{Huber}_\beta(\Delta_{n,ij}),
  \qquad w_{ij} \propto S_{ij}.
  \label{eq:local}
\end{equation}
The Huber penalty \cite{huber1992robust} with threshold $\beta$ is quadratic for $|\Delta|\leq\beta$ and linear for $|\Delta|>\beta$, promoting smoothness among co-regulated genes while remaining robust to biological variation on weak edges. 
% \textcolor{red}{With the constraint of $\mathcal{L}_{\mathrm{local}}$, genes with higher correlation (i.e., larger $w_{ij}$) are more strongly coupled, enforcing their expression levels to be more similar.[Xiaohan: newly added, please check its correctness.]}
With the constraint of $\mathcal{L}_{\mathrm{local}}$, graph-connected genes with larger affinities $w_{ij}$ are more strongly coupled, encouraging more consistent standardized expression patterns.

\smallskip\textbf{Global graph constraint.}
The local term $\mathcal{L}_{\mathrm{local}}$ operates on individual edges and cannot capture graph-wide patterns. We complement it with a spectral penalty via the normalized graph  Laplacian 
(Eq.~\ref{eq:graph_ops}), where $S$ is the affinity matrix and $D$ is the diagonal degree matrix with $D_{ii} = \sum_j S_{ij}$:
\begin{equation}
  \mathcal{L}_{\mathrm{global}}
  = \frac{1}{N}
    \sum_{n=1}^{N}
    \hat{\mathbf{x}}_{1,n}^\top L\;\hat{\mathbf{x}}_{1,n}.
  \label{eq:global}
\end{equation}
Intuitively, $\mathcal{L}_{\mathrm{global}}$ penalizes predictions that oscillate rapidly between neighboring genes, encouraging the output to align with the low-frequency eigenmodes of $\mathcal{G}$, which correspond to large-scale co-expression modules.

\smallskip\textbf{Combined graph constraint.}
The two terms operate at complementary scales: $\mathcal{L}_{\mathrm{local}}$ enforces pairwise consistency, while $\mathcal{L}_{\mathrm{global}}$ suppresses graph-wide oscillations. We combine them as
$
  \mathcal{L}_{\mathrm{graph}}
  = \rho\,\mathcal{L}_{\mathrm{local}}
  + \lambda\,\mathcal{L}_{\mathrm{global}},
  \label{eq:graph_loss}
$
where $\rho$ and $\lambda$ control the local and global regularization strengths. 
Detailed hyperparameter analysis is provided in Appendix Table~\ref{tab:hyper-lossCoefficients}. The overall training objective combines the masked flow matching
loss (i.e., $\mathcal{L}_{\mathrm{MFM}}$ in Eq.~\ref{eq:masked_loss}) with the graph constraint $\mathcal{L}_{\mathrm{graph}}$.

\section{Experiments and Results}
\label{sec:exp}

% \hao{We may need to trim this section to make the information more digestible.} 

\subsection{Experimental Setup}
\label{sec:setup}

% \hao{I notice too few citations applied in the paper. now is slightly above 20. The benchmark datasets need citations?}

\textbf{Datasets.}
We evaluated our method on two public ST collections: HEST-1K~\citep{jaume2024hest} and STImage-1K4M~\citep{chen2024stimage}. 
Our main evaluation is conducted on the 10 benchmark datasets from HEST-1K, including IDC, PRAD, PAAD, SKCM, COAD, READ, CCRCC, HCC, LUNG, and LYMPH-IDC (denoted as LYMPH). 
HEST-1K~\citep{jaume2024hest} is used as the primary benchmark because it provides standardized patient-stratified splits, resulting in a leakage-controlled cross-validation setting across multiple cancer types and tissue contexts. 
In addition, we conducted supplementary experiments on the two largest human cancer categories from STImage-1K4M~\citep{chen2024stimage}, Brain and Breast, to assess scalability on larger ST collections. 
Since STImage-1K4M does not provide standardized patient-level benchmark splits comparable to HEST-1K for these categories, we followed a random train/validation/test split of 8:1:1 and report the corresponding results in Appendix Table~\ref{tab:stimage}.

\textbf{Implementation Details.}
Following prior work \cite{ICLR2025_31cc93d1}, we selected the top 50 genes (denoted as HMHVG-50) from the intersection of highly expressed and highly variable genes. UNI \cite{chen2024towards} was selected as the histology feature extractor, following prior work \citep{pmlr-v267-huang25t}.
The default optimization setup used a batch size of 2, learning rate $5\times 10^{-4}$, 100 epochs, $p_{\rm max}$ as 0.75, and $\alpha$ as 0.6, $\rho$ as 0.3, and $\lambda$ as $10^{-3}$. 
% The model uses 5 sampling steps and a 4-layer backbone with hidden dimension 128, pairwise hidden dimension 128, 4 attention heads, 8 neighbors, dropout 0.2, attention dropout 0.2, layer normalization, and SwiGLU activation. 
% Training uses MSE loss with gradient clipping at 1.0. 
All experiments are conducted on two NVIDIA RTX A5000 GPUs.
We evaluate ST prediction using Pearson correlation (PCC) and introduce Hallmark-Gene PCC (HPCC), a pathway-informed extension of gene-wise PCC. 
Given per-gene PCC $r_g$, target genes $\mathcal{G}$, and MSigDB Hallmark gene sets 
$\mathcal{P}=\{P_k\}_{k=1}^{K}$~\citep{broad2025}, we define
\begin{equation}
\mathrm{HPCC}
=
\frac{1}{|\mathcal{K}|}
\sum_{k\in\mathcal{K}}
\frac{1}{|P_k\cap\mathcal{G}|}
\sum_{g\in P_k\cap\mathcal{G}} r_g,
\quad
\mathcal{K}=\{k:\, |P_k\cap\mathcal{G}|\geq 5\}.
\label{eq:hpcc}
\end{equation}
HPCC averages gene-wise prediction accuracy over Hallmark gene sets with sufficient coverage in the evaluated target panel, providing a biologically informed complement to standard PCC.
In addition to PCC and HPCC, we report complementary results on GGC-Pearson and MSE-Mean in Appendix~\ref{sec:additional_metrics}. 
These additional metrics assess gene-gene correlation fidelity and numerical reconstruction error, respectively.

\textbf{Baselines.}
We compared our model against representative histology-to-ST baselines spanning complementary modeling paradigms. These include ST-Net~\cite{he2020integrating} as an early supervised regression-based method, BLEEP~\cite{xie2023spatially} as a contrastive learning and retrieval-based approach, TRIPLEX~\cite{chung2024accurate} for context-aware modeling with global and neighborhood histology features, Stem~\cite{ICLR2025_31cc93d1} as a diffusion-based generative model that can leverage multiple foundation-model-derived features, and STFlow~\cite{pmlr-v267-huang25t} as a flow-matching-based generative model.
Due to the high computational cost of Stem~\cite{ICLR2025_31cc93d1}, we provide a resource-bounded reproduction in the Appendix Table~\ref{tab:pcc-hmhvg50-stem}. 
Specifically, we train Stem~\cite{ICLR2025_31cc93d1} for 400 epochs and use 200 sampling steps at inference, while keeping remaining settings consistent with the original implementation. 
Other baselines are reproduced using unified training with 100 epochs for a fair comparison. Model complexity is reported in Section~\ref{sec:model-complexity}.

\begin{table*}[t]
\centering
\small
\setlength{\tabcolsep}{3pt}
\caption{
Comparison across the 10 HEST-1K benchmark datasets.
PCC and HPCC on HMHVG-50 genes are reported. 
Values are shown as $\mathrm{mean}_{\mathrm{std}}$ across folds/seeds.
}
\label{tab:pcc-hmhvg50}

\resizebox{\textwidth}{!}{
\begin{tabular}{l|cccccO|cccccO}
\toprule
\multirow{2}{*}{\textbf{Dataset}}
& \multicolumn{6}{c|}{\textbf{PCC} $\uparrow$}
& \multicolumn{6}{c}{\textbf{HPCC} $\uparrow$} \\
\cmidrule(lr){2-7} \cmidrule(lr){8-13}

& \textbf{ST-Net}
& \textbf{BLEEP}
& \textbf{BLEEP-UNI}
& \textbf{TRIPLEX}
& \textbf{STFlow}
& \textbf{Ours}

& \textbf{ST-Net}
& \textbf{BLEEP}
& \textbf{BLEEP-UNI}
& \textbf{TRIPLEX}
& \textbf{STFlow}
& \textbf{Ours} \\

\midrule
IDC   
& \mstd{0.715}{.096} 
& \mstd{0.608}{.227} 
& \mstd{0.767}{.052}
& \mstd{0.753}{.059} 
& \mstd{0.790}{.060} 
& \textbf{\mstd{0.791}{.061}}
& \mstd{0.710}{.103} 
& \mstd{0.605}{.225} 
& \mstd{0.769}{.056}
& \mstd{0.751}{.062} 
& \mstd{0.787}{.066} 
& \textbf{\mstd{0.788}{.072}} \\

PRAD  
& \mstd{0.406}{.008} 
& \mstd{0.345}{.038} 
& \mstd{0.451}{.014}
& \mstd{0.412}{.042} 
& \mstd{0.494}{.029} 
& \textbf{\mstd{0.495}{.020}}
& \mstd{0.507}{.044} 
& \mstd{0.420}{.001} 
& \mstd{0.582}{.017}
& \mstd{0.501}{.033} 
& \mstd{0.599}{.054} 
& \textbf{\mstd{0.610}{.042}} \\

PAAD  
& \mstd{0.504}{.042} 
& \mstd{0.465}{.063} 
& \mstd{0.543}{.052}
& \mstd{0.516}{.056} 
& \mstd{0.574}{.050} 
& \textbf{\mstd{0.580}{.052}}
& \mstd{0.535}{.060} 
& \mstd{0.503}{.077} 
& \mstd{0.599}{.053}
& \mstd{0.567}{.063} 
& \mstd{0.609}{.051} 
& \textbf{\mstd{0.622}{.053}} \\

SKCM  
& \mstd{0.706}{.053} 
& \mstd{0.645}{.088} 
& \mstd{0.758}{.011}
& \mstd{0.748}{.002} 
& \mstd{0.813}{.030} 
& \textbf{\mstd{0.817}{.025}}
& \mstd{0.691}{.047} 
& \mstd{0.648}{.007} 
& \mstd{0.660}{.069}
& \mstd{0.663}{.063} 
& \mstd{0.707}{.126} 
& \textbf{\mstd{0.732}{.123}} \\

COAD  
& \mstd{0.567}{.030} 
& \mstd{0.254}{.067} 
& \mstd{0.648}{.031}
& \mstd{0.579}{.035} 
& \mstd{0.649}{.055} 
& \textbf{\mstd{0.665}{.041}}
& \mstd{0.423}{.002} 
& \mstd{0.165}{.033} 
& \mstd{0.553}{.091}
& \mstd{0.535}{.027} 
& \mstd{0.574}{.060} 
& \textbf{\mstd{0.598}{.028}} \\

READ  
& \mstd{0.190}{.114} 
& \mstd{0.231}{.132} 
& \mstd{0.408}{.088}
& \mstd{0.208}{.136} 
& \mstd{0.428}{.051} 
& \textbf{\mstd{0.449}{.036}}
& \mstd{0.253}{.038} 
& \mstd{0.216}{.066} 
& \mstd{0.329}{.055}
& \mstd{0.186}{.027} 
& \mstd{0.418}{.068} 
& \textbf{\mstd{0.444}{.100}} \\

CCRCC 
& \mstd{0.317}{.071} 
& \mstd{0.306}{.058} 
& \mstd{0.391}{.069}
& \mstd{0.291}{.166} 
& \mstd{0.457}{.056} 
& \textbf{\mstd{0.470}{.059}}
& \mstd{0.332}{.075} 
& \mstd{0.312}{.071} 
& \mstd{0.385}{.080}
& \mstd{0.317}{.179} 
& \mstd{0.469}{.059} 
& \textbf{\mstd{0.476}{.065}} \\

HCC   
& \mstd{0.291}{.044} 
& \mstd{0.121}{.111} 
& \mstd{0.256}{.058}
& \mstd{0.086}{.070} 
& \mstd{0.382}{.104} 
& \textbf{\mstd{0.433}{.133}}
& \mstd{0.461}{.060} 
& \mstd{0.208}{.114} 
& \mstd{0.398}{.075}
& \mstd{0.118}{.107} 
& \mstd{0.541}{.106} 
& \textbf{\mstd{0.564}{.087}} \\

LUNG  
& \mstd{0.734}{.017} 
& \mstd{0.704}{.017} 
& \mstd{0.757}{.004}
& \mstd{0.750}{.004} 
& \mstd{0.782}{.008} 
& \textbf{\mstd{0.783}{.005}}
& \mstd{0.768}{.019} 
& \mstd{0.737}{.017} 
& \mstd{0.779}{.014}
& \mstd{0.782}{.012} 
& \mstd{0.814}{.003} 
& \textbf{\mstd{0.815}{.000}} \\

LYMPH 
& \mstd{0.545}{.130} 
& \mstd{0.511}{.109} 
& \mstd{0.564}{.118}
& \mstd{0.594}{.104} 
& \mstd{0.629}{.128} 
& \textbf{\mstd{0.644}{.123}}
& -- 
& -- 
& --
& -- 
& -- 
& -- \\

\midrule
\avgrow
\textbf{Average}
& 0.497 
& 0.419 
& 0.554
& 0.494 
& 0.600 
& \textbf{0.613}
& 0.520
& 0.424
& 0.562
& 0.491
& 0.613
& \textbf{0.628} \\

\bottomrule
\end{tabular}
}
\end{table*} % pcc: added a new column for bleep-uni
\begin{table}[t]
\centering
\small
\setlength{\tabcolsep}{9pt}
\caption{
Ablation study of our methods on the HMHVG-50. 
Average PCC across the 10 HEST-1K benchmark datasets is reported; per-dataset results are provided in Appendix Table~\ref{tab:ablation-all}.
}
\label{tab:ablation-average}
\resizebox{\textwidth}{!}{
\begin{tabular}{l|c|ccc|cc|c}
\toprule
\rowcolor{OpenSrcBlue}
\textbf{Metric}
& $w/o$ masking
& $w/o$ $\mathcal{L}_{\rm graph}$
& $w/o$ $\mathcal{L}_{\rm global}$
& $w/o$ $\mathcal{L}_{\rm local}$
& $w/o$ WGCNA
& $w/o$ STRING
& \textbf{Ours} \\
\midrule
Average PCC 
& 0.609 
& 0.605 
& 0.609
& 0.605 
& 0.607 
& 0.604 
& \textbf{0.613} \\
\bottomrule
\end{tabular}
}
\end{table} % average ablation

\subsection{Experimental Results}
\label{sec:main}
% \paragraph{Results on Hest-1k Benchmarks.} As shown in Table \ref{tab:pcc_hmhvg50}, our method achieves competitive or superior performance across datasets.
% In particular, we observe consistent improvements in Hallmark-Pathway PCC, indicating better reconstruction of biologically coherent gene programs. This may be because of the explicit and implicit modeling of gene dependencies, compared to per-gene generation.
% To better evaluate the biological validity of generated gene expression in a larger gene scope, we conducted experiments on HMHVG 100, 150, and 200 across 10 benchmarks. 
% As shown in Table \ref{tab:pcc_100_150_200}, our method consistently improves PCC and Hallmark-Gene PCC across datasets as the scope increases, suggesting its capability to capture coordinated gene expression patterns in larger scopes.

\paragraph{Quantitative Comparison with Baselines.}
Table~\ref{tab:pcc-hmhvg50} compares our method with five histology-to-ST approaches on the 10 HEST-1K benchmarks. 
Our method achieves the best average performance on both PCC and HPCC, improving over the strongest existing baseline STFlow from $0.600$ to $0.613$ in PCC and from $0.613$ to $0.628$ in HPCC. 
The gain is particularly meaningful because STFlow already substantially outperforms earlier spot-based and slide-level baselines, making this a relatively strong comparison setting. 
% Across individual datasets, the improvement is not uniform in magnitude, reflecting differences in tissue type, cohort size, and gene-expression structure, but the average trend consistently favors our method. 
The stronger gain on HPCC suggests that explicitly modeling inter-gene dependencies is beneficial not only for individual gene prediction but also for recovering biologically coordinated expression programs.
Across expanded HMHVG-100/150/200 settings as shown in the Appendix Table~\ref{tab:pcc-100-150-200}, our method consistently outperforms STFlow in average PCC and HPCC, indicating that its dependency-aware design remains effective as the target gene set broadens.
\begin{table*}[t]
\centering
\small
\setlength{\tabcolsep}{3pt}
\caption{
Hyperparameter study on masking strategy, maximum masking ratio, and alpha for graph fusion (PCC on HMHVG-50).
}
\label{tab:hyper}
\resizebox{\textwidth}{!}{
\begin{tabular}{l|ccO|cccOc|ccOc}
\toprule
\multirow{2}{*}{\textbf{Dataset}}
& \multicolumn{3}{c|}{\textbf{Masking Strategy (Learnable)}}
& \multicolumn{5}{c|}{\textbf{Maximum Masking Ratio}}
& \multicolumn{4}{c}{\textbf{Alpha for Graph Fusion}} \\
\cmidrule(lr){2-4} \cmidrule(lr){5-9} \cmidrule(lr){10-13}
& \textbf{$f(t) = 0.75$}
& \textbf{$f(t) = 1-t$}
& \textbf{$f(t) = t $}
& \textbf{$p_{max} = 0.15$}
& \textbf{$p_{max} = 0.3$}
& \textbf{$p_{max} = 0.5$}
& \textbf{$p_{max} = 0.75$}
& \textbf{$p_{max} = 0.9$}
& \textbf{$\alpha = 0.4$}
& \textbf{$\alpha = 0.5$}
& \textbf{$\alpha = 0.6$}
& \textbf{$\alpha = 0.7$} \\
\midrule
IDC   & \mstd{0.791}{.062} & \mstd{0.792}{.061} & \mstd{0.791}{.061} & \mstd{0.795}{.058} & \mstd{0.788}{.061} & \mstd{0.791}{.050} & \mstd{0.791}{.061} & \mstd{0.787}{.059} & \mstd{0.791}{.061} & \mstd{0.791}{.058} & \mstd{0.791}{.061} & \mstd{0.790}{.054} \\
PRAD  & \mstd{0.499}{.021} & \mstd{0.494}{.020} & \mstd{0.495}{.020} & \mstd{0.490}{.023} & \mstd{0.493}{.016} & \mstd{0.489}{.007} & \mstd{0.495}{.020} & \mstd{0.494}{.021} & \mstd{0.499}{.024} & \mstd{0.494}{.021} & \mstd{0.495}{.020} & \mstd{0.492}{.028} \\
PAAD  & \mstd{0.580}{.050} & \mstd{0.579}{.044} & \mstd{0.580}{.052} & \mstd{0.577}{.053} & \mstd{0.582}{.052} & \mstd{0.570}{.052} & \mstd{0.580}{.052} & \mstd{0.564}{.058} & \mstd{0.569}{.055} & \mstd{0.576}{.050} & \mstd{0.580}{.052} & \mstd{0.582}{.052} \\
SKCM  & \mstd{0.817}{.026} & \mstd{0.808}{.035} & \mstd{0.817}{.025} & \mstd{0.818}{.017} & \mstd{0.807}{.034} & \mstd{0.818}{.026} & \mstd{0.817}{.025} & \mstd{0.816}{.025} & \mstd{0.816}{.025} & \mstd{0.817}{.025} & \mstd{0.817}{.025} & \mstd{0.813}{.029} \\
COAD  & \mstd{0.666}{.041} & \mstd{0.666}{.041} & \mstd{0.665}{.041} & \mstd{0.662}{.043} & \mstd{0.653}{.055} & \mstd{0.664}{.040} & \mstd{0.665}{.041} & \mstd{0.665}{.040} & \mstd{0.665}{.040} & \mstd{0.665}{.041} & \mstd{0.665}{.041} & \mstd{0.670}{.046} \\
READ  & \mstd{0.438}{.035} & \mstd{0.416}{.044} & \mstd{0.449}{.036} & \mstd{0.421}{.048} & \mstd{0.416}{.048} & \mstd{0.422}{.049} & \mstd{0.449}{.036} & \mstd{0.417}{.057} & \mstd{0.438}{.046} & \mstd{0.430}{.038} & \mstd{0.449}{.036} & \mstd{0.446}{.026} \\
CCRCC & \mstd{0.465}{.061} & \mstd{0.462}{.053} & \mstd{0.470}{.059} & \mstd{0.456}{.052} & \mstd{0.462}{.053} & \mstd{0.446}{.067} & \mstd{0.470}{.059} & \mstd{0.468}{.056} & \mstd{0.445}{.073} & \mstd{0.457}{.056} & \mstd{0.470}{.059} & \mstd{0.456}{.064} \\
HCC   & \mstd{0.431}{.135} & \mstd{0.431}{.134} & \mstd{0.433}{.133} & \mstd{0.430}{.130} & \mstd{0.432}{.132} & \mstd{0.433}{.134} & \mstd{0.433}{.133} & \mstd{0.432}{.132} & \mstd{0.432}{.132} & \mstd{0.432}{.133} & \mstd{0.433}{.133} & \mstd{0.432}{.131} \\
LUNG  & \mstd{0.781}{.007} & \mstd{0.781}{.009} & \mstd{0.783}{.005} & \mstd{0.785}{.005} & \mstd{0.782}{.007} & \mstd{0.782}{.007} & \mstd{0.783}{.005} & \mstd{0.784}{.004} & \mstd{0.785}{.004} & \mstd{0.784}{.004} & \mstd{0.783}{.005} & \mstd{0.782}{.007} \\
LYMPH & \mstd{0.630}{.120} & \mstd{0.631}{.128} & \mstd{0.644}{.123} & \mstd{0.637}{.127} & \mstd{0.633}{.133} & \mstd{0.638}{.122} & \mstd{0.644}{.123} & \mstd{0.629}{.127} & \mstd{0.628}{.133} & \mstd{0.634}{.128} & \mstd{0.644}{.123} & \mstd{0.633}{.131} \\
\midrule
\textbf{Average} & 0.610 & 0.606 & \textbf{0.613} & 0.607 & 0.605 & 0.605 & \textbf{0.613} & 0.606 & 0.607 & 0.608 & \textbf{0.613} & 0.609 \\
\bottomrule
\end{tabular}
}
\end{table*} % hyper

Furthermore, statistical tests across all pairwise comparisons with competing methods yield p-values consistently below 0.05, indicating that our method significantly outperforms all baselines. Details are reported in Appendix Section~\ref{sec:gene-level-pcc-pvalue} and Section~\ref{sec:gene-level-hpcc-pvalue}.

% \textcolor{red}{Furthermore, paired Wilcoxon signed-rank tests on dataset-level PCC across the 10 HEST-1K datasets yielded Holm-adjusted p-values below 0.05 for all pairwise comparisons between \ourmodel~and the competing baselines, confirming that \ourmodel~significantly outperforms all baselines overall in terms of PCC. Detailed results of the overall statistical significance analysis are provided in Appendix Section~\ref{sec:statistical-overall}.}

% \input{figures/pathway_vis}
% \subsection{Gene-Level vs Pathway-Level Trade-off}
% \label{sec:tradeoff}
% We analyze the relationship between gene-level accuracy and pathway-level coherence.
% While improvements in Pearson correlation are moderate, gains in Hallmark-Pathway PCC are more pronounced.
% This suggests that our method primarily enhances biological consistency rather than merely improving point-wise prediction accuracy.
% This behavior aligns with the design of our graph regularization, which enforces smoothness and structured relationships over the gene affinity graph.

\paragraph{Ablation Study.}
\label{sec:ablation}

Table~\ref{tab:ablation-average} summarizes the ablation study of our method, reporting average PCC for HMHVG-50 over the 10 HEST-1K benchmarks.
The full model achieves the best average PCC of 0.613, suggesting that the proposed components are complementary. 
Removing annealed masking reduces the average PCC to 0.609, indicating that masking provides a useful training signal for preventing direct shortcut reconstruction and encouraging dependency-aware prediction. 
Removing the graph-based regularization term yields a larger decrease to 0.605, supporting the role of graph constraints in stabilizing gene-expression prediction under structured inter-gene dependencies. 
% Separating the two graph losses further shows that the Laplacian term mainly contributes global graph smoothness, whereas the Huber term improves robustness to local prediction errors; their combination gives the strongest overall result.
We further examined the construction of the gene graph by removing either WGCNA-derived co-expression graph or STRING-derived functional interaction priors. 
Both variants underperform the full model, with average PCCs of 0.607 and 0.604, respectively, suggesting that dataset-specific co-expression structure and external biological priors provide complementary information for graph construction. 
% Overall, these results support the combined design of conditional masking, graph-guided regularization, and hybrid biological priors.
The per-dataset results are provided in Appendix Table~\ref{tab:ablation-all}.

\paragraph{Hyperparameter Study.}
\label{sec:hyperparameter}

We conducted the hyperparameter studies with respect to the masking schedule, masking intensity, and graph-prior fusion weight, as summarized in Table~\ref{tab:hyper}. 
Among the timestep-dependent masking strategies, the proportional schedule $f(t)=t$ gives the strongest average performance, whereas the inverse schedule $f(t)=1-t$ performs worst. 
This ordering supports our motivation that stronger masking is most effective near the expression endpoint, where the model is more prone to shortcut reconstruction due to stronger target-expression signals in the interpolated input.
% This ordering is consistent with our motivation: masking is most useful near the expression endpoint, where the interpolated input contains stronger target-expression information and the model is more prone to shortcut reconstruction. 
% This empirically supports applying stronger masking near the expression endpoint, where the model is otherwise more prone to direct reconstruction and therefore benefits more from dependency-driven supervision. 
Varying the maximum masking ratio reveals a similar balance: $p_{\max}=0.75$ performs best, indicating that effective masking should be strong enough to expose inter-gene dependencies but not so aggressive that it removes the contextual signals required for stable prediction. 
For the hybrid gene graph, the best result is obtained with $\alpha=0.6$, suggesting a modest preference toward the STRING-derived functional prior while preserving the dataset-specific co-expression structure captured by WGCNA-derived graph. 
% These observations support our final configuration, which uses $f(t)=t$, $p_{\max}=0.75$, and $\alpha=0.6$ throughout the main experiments.
The hyperparameter study for the coefficients $\rho$ and $\lambda$ in the combined graph constraint introduced in Section~\ref{sec:graphloss} is reported in Appendix Table~\ref{tab:hyper-lossCoefficients}.

\begin{figure}[t]
    \centering
    \includegraphics[scale=.40]{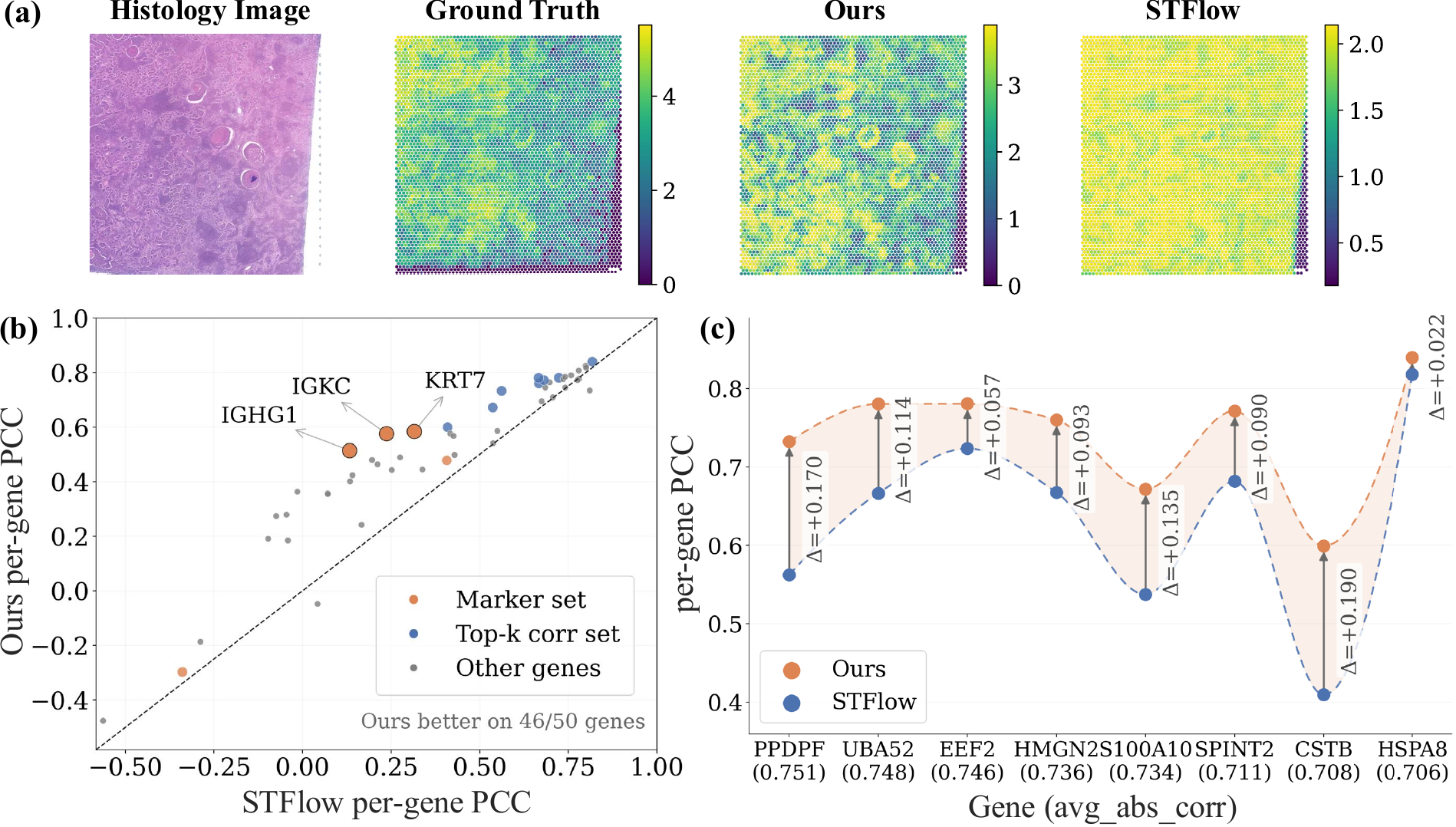}
    \caption{Visualization on LYMPH\_IDC. (a) Visualization of a biologically relevant gene KRT7 on sample NCBI68, with histology image, ground-truth ST, and predictions of \ourmodel~and STFlow (individual min-max color scales to reflect expression intensity). (b) Gene PCC comparison between \ourmodel~and STFlow (each point denotes a gene and points above the diagonal indicate improved prediction by \ourmodel). (c) PCC improvements on the top-8 genes by average absolute correlation.
    }
    \label{fig:vis}
\end{figure}

% \subsection{Qualitative Visualization}
\paragraph{Qualitative Visualization.}
\label{sec:hyperparameter}

To qualitatively assess \ourmodel, we visualize predictions from three perspectives. In all cases, our method is compared against STFlow under the same test splits and gene panels. 
% The visualization aims to evaluate not only per-gene prediction fidelity, but also the preservation of biologically meaningful spatial structure and gene-level relative performance.
We first visualize the \textbf{prediction of biologically relevant genes}. As shown in Fig.~\ref{fig:vis} (a), for the selected marker gene KRT7 in the LYMPH dataset, we show the histology image, the ground-truth ST expression, and the predictions from our method and STFlow. This comparison highlights whether the predicted ST recovers the major spatial gradients, localized high-expression regions, and tissue-specific heterogeneity observed in the histology image and ground truth. As shown in Fig.~\ref{fig:vis} (a), our method generally produces spatial patterns that are visually closer to the ground truth, with clear spatial patterns. More results on other datasets are presented in the Appendix Fig.~\ref{fig:vis-st}.
% Fig.~\ref{fig:vis} (b) is the \textbf{Gene-level PCC scatter comparison}. We summarize gene-wise prediction quality using a scatter plot in which each point corresponds to one gene, the $x$-axis denotes STFlow PCC, and the $y$-axis denotes the PCC of our method. Points above the diagonal indicate genes for which our method outperforms STFlow. This visualization gives a global overview of gene-level gains and losses and reveals whether improvements are broad-based or concentrated in a subset of genes. As shown in Fig.~\ref{fig:vis} (b), a large proportion of genes lie above the diagonal, consistent with the quantitative improvements observed in the benchmark metrics.
Fig.~\ref{fig:vis}(b) presents a \textbf{gene-level PCC scatter comparison}. Each point corresponds to one gene, where the x-axis denotes the PCC achieved by STFlow and the y-axis denotes the PCC achieved by our method. Points above the diagonal indicate genes for which our method outperforms STFlow. This visualization provides a global view of gene-level performance differences and reveals whether improvements are broadly distributed or concentrated in a subset of genes. As shown in Fig.~\ref{fig:vis}(b), a large proportion of genes lie above the diagonal, consistent with the overall quantitative gains reported in the benchmark results.
Fig.~\ref{fig:vis}(c) further visualizes the \textbf{top-$k$ genes with high gene-gene correlation}. To focus on genes that are strongly coupled with the overall transcriptomic program, we ranked genes using the ground-truth gene-gene correlation matrix from the test set. Specifically, for each gene $g$, we computed the average absolute correlation with all other genes:
\(
\mathrm{AvgAbsCorr}(g) = \frac{1}{G-1} \sum_{h \neq g} \left| \mathrm{Corr}(g,h) \right|.
\)
We then selected the top-$k$ genes with the highest $\mathrm{AvgAbsCorr}$ values and compared the PCC achieved by our method and STFlow on these genes. As shown in Fig.~\ref{fig:vis}(c), our method consistently achieves higher PCC on many highly correlated genes, suggesting improved preservation of dependency-relevant expression structure.
% Taken together, these visualizations show that the proposed method improves ST prediction in three respects: it better reconstructs spatial expression patterns of biologically relevant genes, yields broader gene-level gains over STFlow, and more effectively predicts genes with strong transcriptomic connectivity. 
These qualitative trends are consistent with the observed improvements in PCC, further supporting the effectiveness of our method.

% \textbf{Biologically relevant gene visualization.}
% \emph{Spatial visualization of representative biologically relevant genes for each cancer type. For each gene, we show the ground-truth spatial transcriptomic expression together with the predictions from our method and the original STFlow baseline. The selected genes are curated from prior cancer-specific literature, including reported marker genes, biomarker genes, and other disease-associated genes, and are restricted to genes covered by the evaluated cancer-specific panel. PCC values are reported for the two prediction methods.}

% \textbf{Gene-level PCC scatter plot.}
% \emph{Gene-wise comparison of prediction quality between our method and STFlow for each cancer type. Each point denotes one gene, with the $x$-axis showing the PCC of STFlow and the $y$-axis showing the PCC of our method. Points above the diagonal correspond to genes on which our method outperforms STFlow. Marker genes, top-$k$ correlation genes, and their intersection can be highlighted for interpretation.}

% \textbf{Top-$k$ high-correlation gene comparison.}
% \emph{Comparison of per-gene prediction accuracy for the top-$k$ genes ranked by ground-truth average absolute gene--gene correlation. Each row summarizes a highly connected gene selected using only the ground-truth correlation structure, and compares the PCC of our method against STFlow. This visualization highlights whether improved modeling of gene dependency structure translates into better prediction of transcriptomically central genes.}

\section{Conclusion}
\label{sec:conclusion}
We present \ourmodel, a correlation-guided conditional flow matching framework for histology-to-ST generation. 
Rather than treating genes as independent prediction targets, \ourmodel~explicitly promotes cross-gene joint modeling through \textit{annealed masked flow matching} and \textit{gene graph-regularized optimization}. 
The annealed masking strategy encourages the model to infer masked gene expression from related genes, while the gene affinity graph integrates external functional priors and data-driven co-expression structure to impose local consistency and global smoothness on generated ST profiles. 
Across 10 patient-stratified HEST-1K benchmarks and two supplementary STImage-1K4M datasets, \ourmodel~achieves the best average PCC and HPCC among evaluated methods. 
These results suggest that modeling intrinsic gene-gene dependencies is a promising direction for more biologically coherent histology-to-ST prediction.
The limitations and future work are presented in the Appendix Section~\ref{sec:limitations}.

{\small
\bibliographystyle{splncs04}
\bibliography{neuips_2026}
}

%%%%%%%%%%%%%%%%%%%%%%%%%%%%%%%%%%%%%%%%%%%%%%%%%%%%%%%%%%%%

\newpage
\appendix

\vspace{2em}

\noindent {\Large{Appendix}}

\section{Methodology Extension}

\subsection{Proof of Proposition 2.}
\label{sec:proof_proposition2}

 \begin{proof}
  The masked input $\tilde{x}_t$ is shared across all output dimensions, and $\tilde{x}_t$ restricted to $\mathcal{M}$ contains only fixed mask tokens carrying no sample-specific information, so the optimal predictor for every gene $g$ is $v^{*,g} = \mathbb{E}[u_t^g \mid x_t^{(\mathcal{O})}, t, c]$. For $g \in \mathcal{M}$, expanding via the law of total expectation:
  \begin{equation}
    \mathbb{E}\bigl[u_t^g \mid x_t^{(\mathcal{O})}, t, c\bigr]
    = \int u_t^g \;
      p\bigl(x_1^{(\mathcal{M})} \mid x_t^{(\mathcal{O})}, t, c\bigr)
      \;\mathrm{d}x_1^{(\mathcal{M})}.
  \end{equation}
  This integral can only be estimated if the model captures the co-variation between $x_1^{(\mathcal{M})}$ and $x_1^{(\mathcal{O})}$; without such knowledge the model has no basis for predicting $u_t^g$ beyond the unconditional prior. 
\end{proof}

\subsection{Convergence analysis.}
\label{sec:convergence}
A natural concern is whether training under the masked objective
$\mathcal{L}_{\mathrm{MFM}}$ still yields a valid generative model,
given that the network is optimized on corrupted inputs $\tilde{x}_t$
but deployed on clean inputs $x_t$ at inference.
We show that minimizing $\mathcal{L}_{\mathrm{MFM}}$ guarantees
convergence of $\mathcal{L}_{\mathrm{CFM}}$, which governs
inference-time generation. 

\begin{theorem}[Convergence of Annealed-MFM]
\label{thm:convergence}
  Let $q(\mathcal{M})$ be any masking distribution over subsets of
  $\{1,\dots,G\}$. Then:
  \begin{enumerate}
\item[{(i)}]
      \textbf{Lower bound.}\;
      $\mathcal{L}_{\mathrm{MFM}}(\theta;\mathcal{M})
      \ge \mathcal{L}_{\mathrm{CFM}}(\theta)$
      for all $\theta$ and all $\mathcal{M}$.
\item[{(ii)}]
      \textbf{Shared minimizer.}\;
      There exists $\theta^*$ that minimizes
      $\mathcal{L}_{\mathrm{MFM}}$ and simultaneously drives
      $\mathcal{L}_{\mathrm{CFM}}$ to its global minimum.
      Consequently,
      $\;\mathcal{L}_{\mathrm{MFM}}(\theta;\mathcal{M}) \to 0$ lead to $\mathcal{L}_{\mathrm{CFM}}(\theta) \to 0$,
      and the learned flow converges to the target distribution
      under standard regularity conditions~\citep{lipman2023flow}.
  \end{enumerate}
\end{theorem}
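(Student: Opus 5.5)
The plan is to work throughout with the endpoint/velocity prediction as a measurable function of the network input, and to use the $L^2$ projection characterization from Proposition~\ref{prop:decomp} and Proposition~\ref{prop:joint}. The key observation is that the masking map $x_t \mapsto \tilde{x}_t$ of Eq.~\eqref{eq:mask_op} is a deterministic function of $(x_t,\mathcal{M},m)$. Hence, for fixed $\theta$ and $\mathcal{M}$, the masked network defines a predictor on clean inputs,
\[
f_{\theta,\mathcal{M}}(x_t,t,c) := v_\theta(\tilde{x}_t,t,c),
\]
and $\mathcal{L}_{\mathrm{MFM}}(\theta;\mathcal{M})$ is exactly the CFM risk of this predictor. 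For part (i), I will make explicit the reading in which $\mathcal{L}_{\mathrm{CFM}}(\theta)$ on the right-hand side is the CFM risk attained by the parameter $\theta$ after this identification. Under that reading, it is the risk of the best predictor measurable with respect to the information that $\theta$ actually sees.

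\textbf{Step (i).} First I would write the orthogonal (bias--variance) decompositions, using the per-gene split of Eq.~\eqref{eq:decomp}:
\[
\mathcal{L}_{\mathrm{MFM}}(\theta;\mathcal{M}) = \sum_g \mathbb{E}\bigl(v^g_\theta(\tilde{x}_t,t,c)-\mathbb{E}[u^g_t\mid x^{(\mathcal{O})}_t,t,c]\bigr)^2 + \sum_g \mathbb{E}\,\mathrm{Var}\bigl(u^g_t\mid x^{(\mathcal{O})}_t,t,c\bigr),
\]
and the analogous identity for $\mathcal{L}_{\mathrm{CFM}}$ with conditioning on the full $x_t$. Next I would use the tower property: the $\sigma$-algebra generated by $x_t^{(\mathcal{O})}$ is contained in that generated by $x_t$. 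Conditional variance is therefore monotone, $\mathbb{E}\,\mathrm{Var}(u^g\mid x^{(\mathcal{O})}) \ge \mathbb{E}\,\mathrm{Var}(u^g\mid x_t)$, by the law of total variance. Finally, because $f_{\theta,\mathcal{M}}$ is itself an admissible clean-input predictor, its CFM risk is at least the CFM risk computed for $\theta$. Chaining these gives $\mathcal{L}_{\mathrm{MFM}}(\theta;\mathcal{M}) \ge \mathcal{L}_{\mathrm{CFM}}(\theta)$ for each $\mathcal{M}$. Averaging over $q(\mathcal{M})$ then gives the bound for the annealed objective.

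\textbf{Step (ii).} I would construct $\theta^*$ with $v_{\theta^*}(\tilde{x}_t,t,c) = \mathbb{E}[u_t\mid x^{(\mathcal{O})}_t,t,c]$ for every mask pattern. This is possible because the learnable tokens $m$ make the pattern $\mathcal{M}$ readable from $\tilde{x}_t$, given a generic choice of $m$ and a sufficiently expressive backbone. By Proposition~\ref{prop:joint}, $\theta^*$ minimizes $\mathcal{L}_{\mathrm{MFM}}$. On the unmasked pattern it coincides with the CFM optimum of Eq.~\eqref{eq:optimal_gene}, and when evaluated on clean inputs at inference it is exactly that predictor, so it attains the global minimum of $\mathcal{L}_{\mathrm{CFM}}$. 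The ``consequently'' clause then follows by sandwiching: $0 \le \mathcal{L}_{\mathrm{CFM}}(\theta) \le \mathcal{L}_{\mathrm{MFM}}(\theta;\mathcal{M})$ by (i), so $\mathcal{L}_{\mathrm{MFM}}\to 0$ forces $\mathcal{L}_{\mathrm{CFM}}\to 0$. Convergence of the induced flow to the target distribution is then quoted from the CFM marginal-matching theorem of \citep{lipman2023flow}, under its Lipschitz and integrability conditions.

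\textbf{Main obstacle.} The hardest point is the pointwise character of (i). The variance comparison is a statement about infima over predictors. Transferring it to an arbitrary fixed $\theta$ requires the identification above, namely that $\mathcal{L}_{\mathrm{CFM}}(\theta)$ is taken to be the risk of the clean-input predictor induced by $\theta$. I would state that identification carefully as a convention at the start of the proof rather than leave it implicit. A literal comparison of $v_\theta(\tilde{x}_t)$ against an unrelated $v_\theta(x_t)$ cannot be controlled without it.
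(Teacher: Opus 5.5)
There is a genuine gap, and it is the one you flagged as the main obstacle. Your convention sidesteps it rather than closing it.

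Read literally, (i) is false, so it cannot be proved. Take $G=1$, $\mathcal{M}=\{1\}$ and the affine predictor $v_\theta(x,t,c)=w(x-m)+\mathbb{E}[u_t]$. Then $\mathcal{L}_{\mathrm{MFM}}=\mathrm{Var}(u_t)$ for every $w$, while $\mathcal{L}_{\mathrm{CFM}}(\theta)$ grows like $w^2$. The paper's proof applies the data-processing inequality to a fixed $\theta$, which is exactly the illegitimate step you suspected. That inequality compares Bayes risks, not the risks of the two unrelated functions $v_\theta(\tilde{x}_t,\cdot)$ and $v_\theta(x_t,\cdot)$.

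Your Step~(i) is the correct Bayes-risk version of that argument. What it actually establishes is $\mathcal{L}_{\mathrm{MFM}}(\theta;\mathcal{M})\ge\sum_g\mathbb{E}\,\mathrm{Var}(u_t^g\mid x_t,t,c)=\inf\mathcal{L}_{\mathrm{CFM}}$. Its last sentence holds only if ``the CFM risk computed for $\theta$'' means the risk of $f_{\theta,\mathcal{M}}$ itself. Under your convention, the right-hand side of (i) is therefore either that risk, which makes (i) an identity, or a $\theta$-independent Bayes risk. In neither case is it the inference-time risk $\mathbb{E}\|v_\theta(x_t,t,c)-u_t\|^2$ of $f_{\theta,\emptyset}$. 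That risk is what the ``consequently'' clause and the flow-convergence claim are about, so your sandwich $0\le\mathcal{L}_{\mathrm{CFM}}(\theta)\le\mathcal{L}_{\mathrm{MFM}}(\theta;\mathcal{M})$ says nothing about the deployed model.

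There are two further defects. First, both losses have the floor $\sum_g\mathbb{E}\,\mathrm{Var}(u_t^g\mid x_t,t,c)$. This is in general strictly positive, because $x_1-x_0$ is not a function of $x_t$. So the premise $\mathcal{L}_{\mathrm{MFM}}\to0$ is unattainable, and the paper's $\mathcal{L}_{\mathrm{CFM}}(\theta^*)=0$ is wrong: it conflates the CFM loss with the marginal FM loss. Second, if $q(\emptyset)=0$ (e.g.\ a point mass on a nonempty $\mathcal{M}$), then for generic $m$ training never constrains $v_\theta$ on clean inputs. So the conclusion cannot hold for an arbitrary $q$.

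The missing idea is to compare \emph{excess} risks and to exploit the empty mask. Under the paper's schedule, $q_t(\emptyset)=(1-p_{\max}t)^G\ge(1-p_{\max})^G>0$. With your generic-$m$ identifiability, the excess of the averaged masked risk over its infimum splits over patterns into $q_t(\mathcal{M})$-weighted terms $\mathbb{E}\|v_\theta(\tilde{x}_t,t,c)-\mathbb{E}[u_t\mid x_t^{(\mathcal{O})},t,c]\|^2$. Keeping only the $\mathcal{M}=\emptyset$ term gives
\[
\mathcal{L}_{\mathrm{CFM}}(\theta)-\inf\mathcal{L}_{\mathrm{CFM}}
\;\le\;(1-p_{\max})^{-G}\Bigl(\mathbb{E}_{q}\,\mathcal{L}_{\mathrm{MFM}}(\theta;\mathcal{M})-\inf\mathbb{E}_{q}\,\mathcal{L}_{\mathrm{MFM}}\Bigr),
\]
with both infima taken over all measurable predictors. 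This is the statement that should replace (i), and it yields the ``consequently'' clause. It holds only in excess-risk form and only for schedules with $q_t(\emptyset)$ bounded below. The constant is $4^{50}$ at $G=50$, $p_{\max}=0.75$, so the guarantee is purely qualitative.

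Your existence construction in Step~(ii) is sound, and in fact repairs the paper's. The paper evaluates $v_{\theta^*}(x)=\mathbb{E}[u_t\mid x_t=x,t,c]$ at $\tilde{x}_t$. That gives the clean regression function at the point whose masked coordinates equal $m$, not $\mathbb{E}[u_t\mid x_t^{(\mathcal{O})},t,c]$. Defining the predictor separately on each recoverable mask pattern, as you do, is what makes a common minimizer exist.
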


\begin{proof}
  \textit{(i)}~Both losses supervise all $G$ genes against the same
  target $u_t$ and differ only in the network input
  ($\tilde{x}_t$ vs.\ $x_t$).
  Since $\tilde{x}_t$ is a deterministic degradation of $x_t$,
  the data-processing inequality gives, for each gene $g$,
  \begin{equation}
    \mathbb{E}\bigl[
      (v_\theta^g(\tilde{x}_t,t,c) - u_t^g)^2
    \bigr]
    \;\ge\;
    \mathbb{E}\bigl[
      (v_\theta^g(x_t,t,c) - u_t^g)^2
    \bigr].
  \end{equation}
  Summing over $g$ yields the bound.

  \textit{(ii)}~Let $\theta^*$ satisfy
  $v_{\theta^*}(x,t,c) = \mathbb{E}[u_t \mid x,t,c]$ for all
  inputs $x$ over a sufficiently expressive function.
  Then $v_{\theta^*}(\tilde{x}_t,t,c)
  = \mathbb{E}[u_t \mid \tilde{x}_t,t,c]$,
  which is the pointwise minimizer of $\mathcal{L}_{\mathrm{MFM}}$.
  Since $\theta^*$ simultaneously minimizes both objectives,
  and $\mathcal{L}_{\mathrm{CFM}}(\theta^*) = 0$ under standard
  flow matching guarantees~\citep{lipman2023flow}, the global
  minimum of $\mathcal{L}_{\mathrm{MFM}}$ also drives
  $\mathcal{L}_{\mathrm{CFM}}$ to zero.
\end{proof}

\subsection{Exploratory Extension: Graph-Structured Masking}
\label{sec:structuremask}

Proposition~\ref{prop:joint} establishes that masking induces joint
modeling, but the quality of the learning signal depends on
\emph{which} genes are masked together. 
Under independent Bernoulli masking, masked genes are scattered
across unrelated pathways, and the joint conditional
$p(x_1^{(\mathcal{M})} \mid x_1^{(\mathcal{O})}, c)$ factorizes
approximately, weakening the inter-gene signal that masking is designed to exploit.

Instead,  we leverage the gene affinity graph $\mathcal{G}$ to construct structured mask sets that respect biological modularity. At each training step, we sample a set of seed genes uniformly at random and expand each seed to a connected subgraph via $r$-hop neighborhood sampling on $\mathcal{G}$.  This ensures co-regulated genes are masked \emph{together}, forcing the model to reconstruct entire expression modules from regulatory context and maximizing the joint-modeling benefit of Proposition~\ref{prop:joint}.

\textbf{Parameters.} The number of seeds is set to $\lceil p(t) \cdot G / \bar{n}_r \rceil$, where $\bar{n}_r$ is the average $r$-hop neighborhood size in $\mathcal{G}$, so that the expected number of masked genes matches $p(t) \cdot G$. We fix $r = 1$ throughout all experiments; the masking ratio $p(t)$ then controls the number of seeds rather than the neighborhood depth. If sampled neighborhoods overlap, the union is taken as the final mask set $\mathcal{M}$.

\textbf{Results.} Results and analysis are provided in the Appendix Table~\ref{tab:graph_masking_variant} and Appendix Section~\ref{sec:results_GraphStructuredMasking}.

\subsection{Training Procedure}
\label{sec:train}

The overall training objective combines the masked flow matching
loss (Eq.~\ref{eq:masked_loss}) with the graph regularizer
(Eq.~\ref{eq:graph_loss}):
\begin{equation}
  \mathcal{L}
  = \mathcal{L}_{\mathrm{MFM}}
  + \mathcal{L}_{\mathrm{graph}}.
  \label{eq:overall_loss}
\end{equation}
$\mathcal{L}_{\mathrm{MFM}}$ supervises all $G$ genes under
partial observation, incentivizing joint modeling
(Proposition~\ref{prop:joint});
$\mathcal{L}_{\mathrm{graph}}$ constrains the output toward
biologically coherent expression profiles
(Section~\ref{sec:graphloss}). The full procedure is summarized in Algorithm~\ref{alg:training}.

\begin{algorithm}[t]
\caption{\ourmodel~Training}
\label{alg:training}
\begin{algorithmic}[1]
\Require Gene affinity graph $\mathcal{G}$,  
         max masking ratio $p_{\max}$, learning rate $\eta$, $x_0\sim p_0$, $x_1\sim p_1$
\For{each training iteration}
  \State $x_t \leftarrow (1-t)\,x_0 + t\,x_1$, \;
         $t \sim \mathrm{Uniform}(0,1)$  \Comment{Interpolation} %  , \;         $x_0 \sim p_0$
         % \vspace{1mm}
\State $\mathcal{M} \sim q(\mathcal{M} \mid \mathcal{G},\, p(t))$, \; $p(t) \leftarrow p_{\max} \cdot t$ 
    \Comment{Graph mask  (\S\ref{sec:structuremask}), Eq.~\ref{eq:mask_rate}}
  
  \State $\tilde{x}_{t,g} \leftarrow
    \begin{cases}
      m_g & g \in \mathcal{M} \\
      x_{t,g} & g \notin \mathcal{M}
    \end{cases}$
    \Comment{Masking, Eq.~\ref{eq:mask_op}}
    
  \State $v_t \leftarrow v_\theta(\tilde{x}_t, t, c)$
    \Comment{Prediction}
     \State 
         $\mathcal{L} \leftarrow
         \|v_t - (x_1{-}x_0)\|^2
         + \rho\,\mathcal{L}_{\mathrm{local}}
         + \lambda\,\mathcal{L}_{\mathrm{global}}$   \Comment{Eq.~\ref{eq:masked_loss}, Eq.~\ref{eq:graph_loss}}
  % \State $\mathcal{L}_{\mathrm{MFM}} \leftarrow
  %   \frac{1}{NG}\sum_{n,g}(\hat{x}_{1,ng} - x_{1,ng})^2$
   
  % \State $\mathcal{L}_{\mathrm{graph}} \leftarrow
  %   \rho\,\mathcal{L}_{\mathrm{local}}
  %   + \lambda\,\mathcal{L}_{\mathrm{global}}$
  %   \Comment{}
  \State $\theta \leftarrow \theta
    - \eta\,\nabla_\theta
    (\mathcal{L})$   \Comment{Update}
\EndFor
\end{algorithmic}
\end{algorithm}

\begin{table*}[t]
\centering
\small
\renewcommand{\arraystretch}{1.05}
\setlength{\tabcolsep}{4pt}
\caption{Comparison between STFlow and ours on HEST-1K benchmarks under HMHVG-100/150/200 settings, with PCC and HPCC reported.}
\label{tab:pcc-100-150-200}

\resizebox{\textwidth}{!}{
\begin{tabular}{lcOcOcO|cOcOcO}
\toprule
\multirow{2}{*}{\textbf{Dataset}}
& \multicolumn{2}{c}{\textbf{PCC-100} $\uparrow$}
& \multicolumn{2}{c}{\textbf{PCC-150} $\uparrow$}
& \multicolumn{2}{c|}{\textbf{PCC-200} $\uparrow$}
& \multicolumn{2}{c}{\textbf{HPCC-100} $\uparrow$}
& \multicolumn{2}{c}{\textbf{HPCC-150} $\uparrow$}
& \multicolumn{2}{c}{\textbf{HPCC-200} $\uparrow$} \\
\cmidrule(lr){2-3}
\cmidrule(lr){4-5}
\cmidrule(lr){6-7}
\cmidrule(lr){8-9}
\cmidrule(lr){10-11}
\cmidrule(lr){12-13}
& \textbf{STFlow} & \textbf{Ours}
& \textbf{STFlow} & \textbf{Ours}
& \textbf{STFlow} & \textbf{Ours}
& \textbf{STFlow} & \textbf{Ours}
& \textbf{STFlow} & \textbf{Ours}
& \textbf{STFlow} & \textbf{Ours} \\
\midrule

IDC
& \mstd{0.772}{.050} & \mstd{0.778}{.052}
& \mstd{0.735}{.048} & \mstd{0.741}{.053}
& \mstd{0.707}{.052} & \mstd{0.705}{.055}
& \mstd{0.777}{.040} & \mstd{0.786}{.042}
& \mstd{0.756}{.039} & \mstd{0.760}{.047}
& \mstd{0.722}{.048} & \mstd{0.719}{.052} \\

PRAD
& \mstd{0.480}{.026} & \mstd{0.475}{.005}
& \mstd{0.475}{.028} & \mstd{0.479}{.008}
& \mstd{0.473}{.024} & \mstd{0.473}{.007}
& \mstd{0.497}{.030} & \mstd{0.484}{.004}
& \mstd{0.484}{.025} & \mstd{0.487}{.001}
& \mstd{0.470}{.019} & \mstd{0.466}{.001} \\

PAAD
& -- & --
& -- & --
& -- & --
& -- & --
& -- & --
& -- & -- \\

SKCM
& \mstd{0.793}{.020} & \mstd{0.805}{.014}
& \mstd{0.801}{.003} & \mstd{0.810}{.002}
& \mstd{0.788}{.009} & \mstd{0.789}{.001}
& \mstd{0.809}{.035} & \mstd{0.820}{.031}
& \mstd{0.812}{.004} & \mstd{0.822}{.005}
& \mstd{0.797}{.010} & \mstd{0.799}{.004} \\

COAD
& \mstd{0.602}{.061} & \mstd{0.604}{.053}
& \mstd{0.580}{.056} & \mstd{0.581}{.064}
& \mstd{0.547}{.063} & \mstd{0.557}{.049}
& \mstd{0.615}{.063} & \mstd{0.607}{.051}
& \mstd{0.570}{.075} & \mstd{0.570}{.077}
& \mstd{0.553}{.081} & \mstd{0.560}{.065} \\

READ
& \mstd{0.368}{.043} & \mstd{0.372}{.043}
& \mstd{0.336}{.032} & \mstd{0.356}{.021}
& \mstd{0.337}{.038} & \mstd{0.349}{.053}
& \mstd{0.364}{.013} & \mstd{0.371}{.031}
& \mstd{0.328}{.003} & \mstd{0.351}{.025}
& \mstd{0.342}{.004} & \mstd{0.348}{.027} \\

CCRCC
& \mstd{0.450}{.049} & \mstd{0.463}{.052}
& \mstd{0.436}{.062} & \mstd{0.433}{.058}
& \mstd{0.433}{.054} & \mstd{0.442}{.054}
& \mstd{0.456}{.037} & \mstd{0.462}{.051}
& \mstd{0.460}{.058} & \mstd{0.451}{.057}
& \mstd{0.447}{.056} & \mstd{0.456}{.055} \\

HCC
& \mstd{0.392}{.091} & \mstd{0.425}{.113}
& \mstd{0.329}{.047} & \mstd{0.353}{.066}
& \mstd{0.356}{.063} & \mstd{0.378}{.082}
& \mstd{0.393}{.091} & \mstd{0.402}{.098}
& \mstd{0.328}{.043} & \mstd{0.336}{.055}
& \mstd{0.379}{.064} & \mstd{0.384}{.079} \\

LUNG
& \mstd{0.731}{.010} & \mstd{0.730}{.012}
& \mstd{0.685}{.013} & \mstd{0.690}{.014}
& -- & --
& \mstd{0.747}{.022} & \mstd{0.747}{.024}
& \mstd{0.716}{.019} & \mstd{0.721}{.021}
& -- & -- \\

LYMPH
& \mstd{0.668}{.118} & \mstd{0.670}{.118}
& \mstd{0.639}{.101} & \mstd{0.647}{.098}
& \mstd{0.661}{.102} & \mstd{0.655}{.107}
& \mstd{0.690}{.109} & \mstd{0.690}{.111}
& \mstd{0.665}{.095} & \mstd{0.675}{.089}
& \mstd{0.674}{.095} & \mstd{0.670}{.099} \\

\midrule
\textbf{Average}
& 0.584 & \textbf{0.591}
& 0.557 & \textbf{0.566}
& 0.538 & \textbf{0.543}
& 0.594 & \textbf{0.597}
& 0.569 & \textbf{0.575}
& 0.548 & \textbf{0.550} \\
\bottomrule
\end{tabular}
}
\end{table*}
\begin{table*}[t]
\centering
\small
\setlength{\tabcolsep}{4pt}
\caption{Ablation study of different graph priors and module design (PCC on HMHVG-50).}
\label{tab:ablation-all}
\resizebox{\textwidth}{!}{
\begin{tabular}{l|cccccccccc|O}
\toprule
\textbf{Ablation} 
& \textbf{IDC} 
& \textbf{PRAD} 
& \textbf{PAAD} 
& \textbf{SKCM} 
& \textbf{COAD} 
& \textbf{READ} 
& \textbf{CCRCC} 
& \textbf{HCC} 
& \textbf{LUNG} 
& \textbf{LYMPH} 
& \textbf{Avg.} \\
\midrule
$w/o$ masking 
& \mstd{0.790}{.063} 
& \mstd{0.490}{.029} 
& \mstd{0.579}{.049} 
& \mstd{0.813}{.021} 
& \textbf{\mstd{0.666}{.048}} 
& \mstd{0.434}{.057} 
& \mstd{0.461}{.055} 
& \mstd{0.428}{.122} 
& \mstd{0.780}{.008} 
& \textbf{\mstd{0.645}{.123}} 
& 0.609 \\
$w/o$ $\mathcal{L}_{\rm graph}$ 
& \textbf{\mstd{0.794}{.059}} 
& \mstd{0.496}{.033} 
& \mstd{0.577}{.049} 
& \textbf{\mstd{0.818}{.029}} 
& \mstd{0.647}{.050} 
& \mstd{0.451}{.038}
& \mstd{0.460}{.049} 
& \mstd{0.398}{.098} 
& \mstd{0.781}{.008} 
& \mstd{0.632}{.125} 
& 0.605 \\
$w/o$ $\mathcal{L}_{\rm Local}$ 
& \mstd{0.789}{.054}
& \mstd{0.497}{.021}
& \mstd{0.574}{.056}
& \mstd{0.811}{.026}
& \mstd{0.649}{.048}
& \mstd{0.451}{.038}
& \mstd{0.462}{.048}
& \mstd{0.408}{.107}
& \mstd{0.780}{.009}
& \mstd{0.633}{.128}
& 0.605 \\
$w/o$ $\mathcal{L}_{\rm Global}$ 
& \mstd{0.792}{.060}
& \mstd{0.497}{.022}
& \mstd{0.573}{.058}
& \mstd{0.815}{.026}
& \mstd{0.663}{.042}
& \textbf{\mstd{0.452}{.019}}
& \mstd{0.459}{.058}
& \mstd{0.430}{.131}
& \mstd{0.782}{.007}
& \mstd{0.631}{.133}
& 0.609 \\
\midrule
$w/o$ WGCNA
& \mstd{0.792}{.056} 
& \mstd{0.491}{.023} 
& \mstd{0.576}{.051} 
& \mstd{0.805}{.034} 
& \mstd{0.650}{.053} 
& \mstd{0.448}{.030} 
& \mstd{0.459}{.052} 
& \textbf{\mstd{0.436}{.128}} 
& \textbf{\mstd{0.783}{.006}} 
& \mstd{0.630}{.123} 
& 0.607 \\
$w/o$ STRING
& \mstd{0.788}{.063} 
& \textbf{\mstd{0.502}{.020}} 
& \mstd{0.575}{.050} 
& \mstd{0.809}{.037} 
& \mstd{0.651}{.054} 
& \mstd{0.425}{.045} 
& \mstd{0.456}{.061} 
& \mstd{0.430}{.132} 
& \mstd{0.781}{.006} 
& \mstd{0.626}{.127} 
& 0.604 \\
\midrule
Ours
& \mstd{0.791}{.061} 
& \mstd{0.495}{.020} 
& \textbf{\mstd{0.580}{.052}} 
& \mstd{0.817}{.025} 
& \mstd{0.665}{.041} 
& \mstd{0.449}{.036} 
& \textbf{\mstd{0.470}{.059}} 
& \mstd{0.433}{.133} 
& \textbf{\mstd{0.783}{.005}} 
& \mstd{0.644}{.123} 
& \textbf{0.613} \\
\bottomrule
\end{tabular}
}
\end{table*}
\begin{table*}[t]
\centering
\small
\setlength{\tabcolsep}{5pt}
\caption{
Hyperparameter study of the Local and Global regularization coefficient on HMHVG-50 genes.
PCC across the 10 HEST-1K benchmark datasets is reported as $\mathrm{mean}_{\mathrm{std}}$, where the subscript denotes the standard deviation across folds/seeds.
}
\label{tab:hyper-lossCoefficients}
\resizebox{0.6\textwidth}{!}{
\begin{tabular}{l|cOc|cOc}
\toprule
\multirow{2}{*}{\textbf{Dataset}}
& \multicolumn{3}{c|}{\textbf{Local Coefficient} $\mathbf{\rho}$}
& \multicolumn{3}{c}{\textbf{Global Coefficient} $\mathbf{\lambda}$} \\
\cmidrule(lr){2-4} \cmidrule(lr){5-7}
& \textbf{0.1}
& \textbf{0.3}
& \textbf{0.5}
& $\mathbf{10^{-4}}$
& $\mathbf{10^{-3}}$
& $\mathbf{10^{-2}}$ \\
\midrule
IDC   
& \mstd{0.792}{.058}
& \mstd{0.791}{.061}
& \textbf{\mstd{0.793}{.058}}
& \textbf{\mstd{0.794}{.057}}
& \mstd{0.791}{.061}
& \mstd{0.789}{.063} \\

PRAD  
& \textbf{\mstd{0.499}{.024}}
& \mstd{0.495}{.020}
& \mstd{0.495}{.019}
& \mstd{0.494}{.020}
& \mstd{0.495}{.020}
& \textbf{\mstd{0.497}{.014}} \\

PAAD  
& \mstd{0.575}{.056}
& \textbf{\mstd{0.580}{.052}}
& \mstd{0.566}{.055}
& \mstd{0.565}{.050}
& \textbf{\mstd{0.580}{.052}}
& \mstd{0.553}{.066} \\

SKCM  
& \textbf{\mstd{0.825}{.019}}
& \mstd{0.817}{.025}
& \mstd{0.815}{.031}
& \mstd{0.814}{.026}
& \mstd{0.817}{.025}
& \textbf{\mstd{0.821}{.023}} \\

COAD  
& \mstd{0.651}{.059}
& \textbf{\mstd{0.665}{.041}}
& \textbf{\mstd{0.665}{.048}}
& \mstd{0.664}{.042}
& \textbf{\mstd{0.665}{.041}}
& \mstd{0.656}{.039} \\

READ  
& \mstd{0.444}{.024}
& \textbf{\mstd{0.449}{.036}}
& \mstd{0.447}{.042}
& \mstd{0.437}{.042}
& \textbf{\mstd{0.449}{.036}}
& \mstd{0.444}{.020} \\

CCRCC 
& \mstd{0.462}{.061}
& \textbf{\mstd{0.470}{.059}}
& \mstd{0.444}{.064}
& \mstd{0.438}{.067}
& \textbf{\mstd{0.470}{.059}}
& \mstd{0.451}{.057} \\

HCC   
& \mstd{0.419}{.124}
& \textbf{\mstd{0.433}{.133}}
& \mstd{0.428}{.127}
& \mstd{0.431}{.131}
& \mstd{0.433}{.133}
& \textbf{\mstd{0.446}{.136}} \\

LUNG  
& \mstd{0.781}{.008}
& \textbf{\mstd{0.783}{.005}}
& \mstd{0.779}{.009}
& \mstd{0.782}{.006}
& \textbf{\mstd{0.783}{.005}}
& \mstd{0.776}{.008} \\

LYMPH 
& \mstd{0.634}{.123}
& \textbf{\mstd{0.644}{.123}}
& \mstd{0.629}{.124}
& \mstd{0.640}{.125}
& \mstd{0.644}{.123}
& \textbf{\mstd{0.647}{.119}} \\

\midrule
\textbf{Average}
& 0.608
& \textbf{0.613}
& 0.606
& 0.606
& \textbf{0.613}
& 0.608 \\
\bottomrule
\end{tabular}
}
\end{table*}

\begin{figure}[t]
    \centering
    \includegraphics[width=\textwidth]{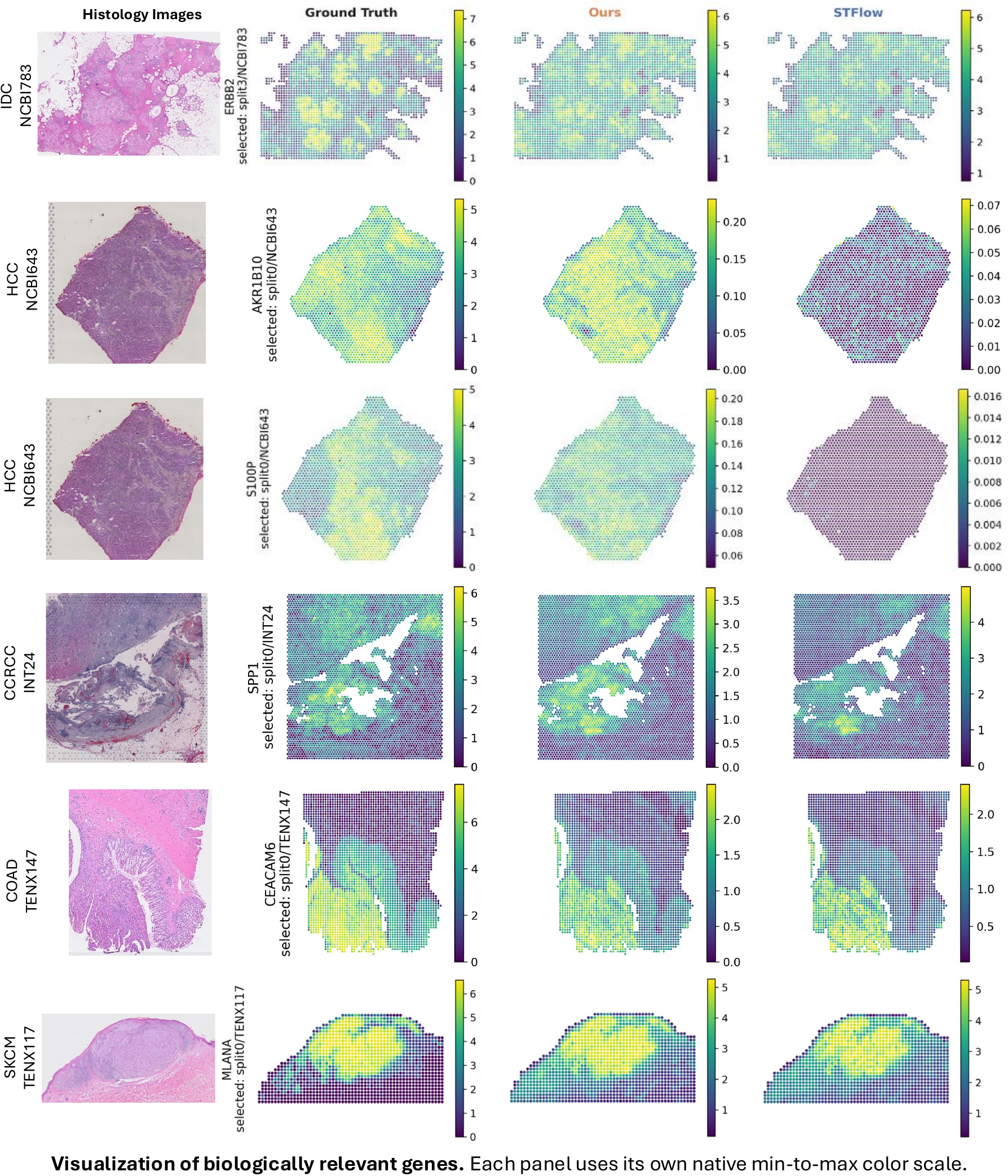}
    \caption{Visualization of Marker Genes on HEST-1K benchmark.}
    \label{fig:vis-st}
\end{figure}

\begin{table*}[t]
\centering
\small
\setlength{\tabcolsep}{4pt}
\caption{
Comparison across the 10 HEST-1K benchmark datasets.
GGC-Pearson on HMHVG-50 genes is reported.
Values are shown as $\mathrm{mean}_{\mathrm{std}}$, where the subscript denotes the standard deviation across folds/seeds.
}
\label{tab:ggc-pearson}

\resizebox{0.70\textwidth}{!}{
\begin{tabular}{l|ccccccO}
\toprule
\textbf{Dataset}
& \textbf{ST-Net}
& \textbf{BLEEP}
& \textbf{BLEEP-UNI}
& \textbf{TRIPLEX}
& \textbf{Stem}
& \textbf{STFlow}
& \textbf{Ours} \\
\midrule
IDC   
& \mstd{0.742}{.055}
& \mstd{0.691}{.088}
& \mstd{0.774}{.068}
& \mstd{0.781}{.059}
& \mstd{0.711}{.062}
& \mstd{0.793}{.064}
& \textbf{\mstd{0.796}{.054}} \\

PRAD  
& \textbf{\mstd{0.851}{.019}}
& \mstd{0.843}{.003}
& \mstd{0.815}{.085}
& \mstd{0.836}{.046}
& \mstd{0.828}{.009}
& \mstd{0.802}{.063}
& \mstd{0.778}{.075} \\

PAAD  
& \mstd{0.545}{.053}
& \mstd{0.498}{.027}
& \mstd{0.553}{.051}
& \mstd{0.581}{.125}
& \mstd{-0.025}{.012}
& \textbf{\mstd{0.685}{.097}}
& \mstd{0.624}{.102} \\

SKCM  
& \mstd{0.625}{.015}
& \mstd{0.696}{.044}
& \textbf{\mstd{0.761}{.094}}
& \mstd{0.725}{.120}
& \mstd{0.001}{.003}
& \mstd{0.749}{.110}
& \mstd{0.759}{.104} \\

COAD  
& \mstd{0.664}{.049}
& \mstd{0.539}{.001}
& \textbf{\mstd{0.694}{.080}}
& \mstd{0.666}{.028}
& \mstd{0.088}{.135}
& \mstd{0.673}{.063}
& \mstd{0.687}{.044} \\

READ  
& \mstd{0.830}{.015}
& \mstd{0.748}{.102}
& \mstd{0.645}{.213}
& \mstd{0.829}{.030}
& \mstd{0.011}{.012}
& \mstd{0.867}{.043}
& \textbf{\mstd{0.876}{.049}} \\

CCRCC 
& \mstd{0.584}{.095}
& \mstd{0.562}{.141}
& \mstd{0.643}{.101}
& \mstd{0.592}{.103}
& \mstd{0.634}{.116}
& \mstd{0.657}{.108}
& \textbf{\mstd{0.669}{.091}} \\

HCC   
& \mstd{0.271}{.054}
& \mstd{0.131}{.087}
& \textbf{\mstd{0.276}{.034}}
& \mstd{0.140}{.107}
& \mstd{0.000}{.013}
& \mstd{0.181}{.037}
& \mstd{0.240}{.124} \\

LUNG  
& \mstd{0.581}{.040}
& \mstd{0.608}{.052}
& \mstd{0.649}{.034}
& \mstd{0.578}{.007}
& \mstd{0.015}{.006}
& \mstd{0.656}{.019}
& \textbf{\mstd{0.706}{.002}} \\

LYMPH 
& \mstd{0.364}{.144}
& \mstd{0.434}{.102}
& \mstd{0.472}{.148}
& \textbf{\mstd{0.481}{.155}}
& \mstd{0.111}{.144}
& \mstd{0.261}{.178}
& \mstd{0.341}{.225} \\

\midrule
\avgrow
\textbf{Average}
& 0.606
& 0.575
& 0.628
& 0.621
& 0.237
& 0.633
& \textbf{0.648} \\

\bottomrule
\end{tabular}
}
\end{table*}
\begin{table*}[t]
\centering
\small
\setlength{\tabcolsep}{5pt}
\caption{
Comparison across the 10 HEST-1K benchmark datasets.
MSE-Mean on HMHVG-50 genes is reported.
Values are shown as $\mathrm{mean}_{\mathrm{std}}$, where the subscript denotes the standard deviation across folds/seeds.
}
\label{tab:mse}

\resizebox{0.70\textwidth}{!}{
\begin{tabular}{l|cccccO}
\toprule
\textbf{Dataset}
& \textbf{ST-Net}
& \textbf{BLEEP}
& \textbf{BLEEP-UNI}
& \textbf{TRIPLEX}
& \textbf{STFlow}
& \textbf{Ours} \\
\midrule
IDC   
& \mstd{2.313}{3.245}
& \mstd{2.763}{3.385}
& \mstd{2.260}{3.366}
& \mstd{2.108}{3.275}
& \mstd{2.288}{3.182}
& \textbf{\mstd{2.013}{2.799}} \\

PRAD  
& \mstd{1.551}{0.729}
& \mstd{2.020}{0.922}
& \mstd{1.696}{0.736}
& \mstd{1.595}{0.817}
& \mstd{1.452}{0.865}
& \textbf{\mstd{1.409}{0.713}} \\

PAAD  
& \mstd{1.332}{0.875}
& \mstd{1.401}{0.897}
& \mstd{1.222}{0.770}
& \mstd{1.279}{0.847}
& \textbf{\mstd{1.123}{0.724}}
& \mstd{1.141}{0.774} \\

SKCM  
& \mstd{2.894}{2.187}
& \mstd{3.311}{1.838}
& \mstd{2.565}{1.805}
& \mstd{2.853}{1.805}
& \mstd{2.206}{1.765}
& \textbf{\mstd{2.182}{1.620}} \\

COAD  
& \mstd{3.436}{2.347}
& \mstd{5.370}{3.833}
& \textbf{\mstd{3.006}{2.405}}
& \mstd{3.545}{2.824}
& \mstd{3.184}{2.680}
& \mstd{3.061}{2.341} \\

READ  
& \mstd{2.679}{2.010}
& \mstd{2.457}{2.053}
& \mstd{2.010}{1.651}
& \mstd{2.673}{1.960}
& \textbf{\mstd{1.564}{1.095}}
& \mstd{1.611}{1.143} \\

CCRCC 
& \mstd{1.655}{1.098}
& \mstd{1.938}{1.050}
& \mstd{1.709}{0.987}
& \mstd{1.582}{1.043}
& \mstd{1.467}{0.803}
& \textbf{\mstd{1.321}{0.764}} \\

HCC   
& \mstd{4.892}{3.684}
& \mstd{5.384}{3.935}
& \mstd{4.682}{3.855}
& \mstd{4.767}{3.627}
& \mstd{4.366}{3.529}
& \textbf{\mstd{4.163}{3.346}} \\

LUNG  
& \mstd{1.691}{1.780}
& \mstd{1.799}{1.947}
& \mstd{1.676}{2.036}
& \mstd{1.727}{1.938}
& \mstd{1.539}{2.013}
& \textbf{\mstd{1.530}{2.229}} \\

LYMPH 
& \mstd{2.444}{3.007}
& \mstd{3.174}{3.766}
& \mstd{2.487}{3.250}
& \mstd{2.238}{3.133}
& \mstd{2.000}{2.726}
& \textbf{\mstd{1.902}{2.844}} \\

\midrule
\avgrow
\textbf{Average}
& 2.489
& 2.962
& 2.331
& 2.436
& 2.119
& \textbf{2.033} \\

\bottomrule
\end{tabular}
}
\end{table*}
\begin{table}[t]
\centering
\small
\setlength{\tabcolsep}{8pt}
\caption{
Supplementary evaluation on the two largest human cancer categories from STImage-1K4M.
PCC and HPCC are reported as $\mathrm{mean}_{\mathrm{std}}$.
}
\label{tab:stimage}
\resizebox{0.5\textwidth}{!}{
\begin{tabular}{l|cO|cO}
\toprule
\multirow{2}{*}{\textbf{Dataset}}
& \multicolumn{2}{c|}{\textbf{Pearson} $\uparrow$}
& \multicolumn{2}{c}{\textbf{HPCC} $\uparrow$} \\
\cmidrule(lr){2-3} \cmidrule(lr){4-5}
& \textbf{STFlow}
& \textbf{Ours}
& \textbf{STFlow}
& \textbf{Ours} \\
\midrule
Brain
& \mstd{0.546}{.264}
& \textbf{\mstd{0.558}{.245}}
& --
& -- \\
Breast
& \mstd{0.513}{.173}
& \textbf{\mstd{0.525}{.160}}
& \mstd{0.464}{.093}
& \textbf{\mstd{0.473}{.081}} \\
\midrule
Avg.
& 0.530
& \textbf{0.542}
& 0.464
& \textbf{0.473} \\
\bottomrule
\end{tabular}
}
\end{table}
\begin{table}[t]
\centering
\small
\setlength{\tabcolsep}{7pt}
\caption{
Pearson correlation comparison between STFlow, a Graph-Structured Masking variant of our method, and our main model.
Results are reported on the 10 HEST-1K benchmarks and the two supplementary STImage-1K4M cancer categories.
Values are shown as $\mathrm{mean}_{\mathrm{std}}$, where the subscript denotes the standard deviation across folds/seeds.
}
\label{tab:graph_masking_variant}
\resizebox{0.5\textwidth}{!}{
\begin{tabular}{l|ccO}
\toprule
\textbf{Dataset}
& \textbf{STFlow}
& \textbf{Graph-Structured Masking}
& \textbf{Ours} \\
\midrule
\multicolumn{4}{c}{\textit{HEST-1K benchmarks}} \\
\midrule
IDC   
& \mstd{0.790}{.060}
& \textbf{\mstd{0.797}{.056}}
& \mstd{0.791}{.061} \\

PRAD  
& \mstd{0.494}{.029}
& \textbf{\mstd{0.497}{.019}}
& \mstd{0.495}{.020} \\

PAAD  
& \mstd{0.574}{.050}
& \mstd{0.577}{.049}
& \textbf{\mstd{0.580}{.052}} \\

SKCM  
& \mstd{0.813}{.030}
& \mstd{0.805}{.030}
& \textbf{\mstd{0.817}{.025}} \\

COAD  
& \mstd{0.649}{.055}
& \mstd{0.664}{.041}
& \textbf{\mstd{0.665}{.041}} \\

READ  
& \mstd{0.428}{.051}
& \mstd{0.438}{.064}
& \textbf{\mstd{0.449}{.036}} \\

CCRCC 
& \mstd{0.457}{.056}
& \mstd{0.449}{.062}
& \textbf{\mstd{0.470}{.059}} \\

HCC   
& \mstd{0.382}{.104}
& \mstd{0.431}{.129}
& \textbf{\mstd{0.433}{.133}} \\

LUNG  
& \mstd{0.782}{.008}
& \mstd{0.782}{.008}
& \textbf{\mstd{0.783}{.005}} \\

LYMPH 
& \mstd{0.629}{.128}
& \mstd{0.642}{.121}
& \textbf{\mstd{0.644}{.123}} \\

\midrule
\textbf{Average}
& 0.600
& 0.608
& \textbf{0.613} \\

\midrule
\multicolumn{4}{c}{\textit{STImage-1K4M supplementary categories}} \\
\midrule
Brain
& \mstd{0.546}{.264}
& \mstd{0.556}{.238}
& \textbf{\mstd{0.558}{.245}} \\

Breast
& \mstd{0.513}{.173}
& \mstd{0.511}{.182}
& \textbf{\mstd{0.525}{.160}} \\

\midrule
\textbf{Average}
& 0.530
& 0.534
& \textbf{0.542} \\

\bottomrule
\end{tabular}
}
\end{table}
\begin{table*}[t]
\centering
\small
\setlength{\tabcolsep}{2.5pt}
\caption{
Comparison across the 10 HEST-1K benchmark datasets.
PCC and HPCC on the HMHVG-50 are reported. 
Values are reported as $\mathrm{mean}_{\mathrm{std}}$, where the subscript denotes the standard deviation across folds/seeds.
% \hao{And I would suggest times everything by 10x, eg $0.715_{0.096}$ ->  $7.15_{0.96} $, what make it more comparable for the demonstration and the metrics is marked as eg PCC (1.0E1). }
}
\label{tab:pcc-hmhvg50-stem}

\resizebox{\textwidth}{!}{
\begin{tabular}{l|ccccccO|ccccccO}
\toprule
\multirow{2}{*}{\textbf{Dataset}}
& \multicolumn{7}{c|}{\textbf{PCC} $\uparrow$}
& \multicolumn{7}{c}{\textbf{HPCC} $\uparrow$} \\
\cmidrule(lr){2-8} \cmidrule(lr){9-15}

& \textbf{ST-Net}
& \textbf{BLEEP}
& \textbf{BLEEP-UNI}
& \textbf{TRIPLEX}
& \textbf{Stem}
& \textbf{STFlow}
& \textbf{Ours}

& \textbf{ST-Net}
& \textbf{BLEEP}
& \textbf{BLEEP-UNI}
& \textbf{TRIPLEX}
& \textbf{Stem}
& \textbf{STFlow}
& \textbf{Ours} \\

\midrule
IDC   
& \mstd{0.715}{.096} 
& \mstd{0.608}{.227} 
& \mstd{0.767}{.052}
& \mstd{0.753}{.059} 
& \mstd{0.612}{.056} 
& \mstd{0.790}{.060} 
& \textbf{\mstd{0.791}{.061}}
& \mstd{0.710}{.103} 
& \mstd{0.605}{.225} 
& \mstd{0.769}{.056}
& \mstd{0.751}{.062} 
& \mstd{0.587}{.056} 
& \mstd{0.787}{.066} 
& \textbf{\mstd{0.788}{.072}} \\

PRAD  
& \mstd{0.406}{.008} 
& \mstd{0.345}{.038} 
& \mstd{0.451}{.014}
& \mstd{0.412}{.042} 
& \mstd{0.227}{.038} 
& \mstd{0.494}{.029} 
& \textbf{\mstd{0.495}{.020}}
& \mstd{0.507}{.044} 
& \mstd{0.420}{.001} 
& \mstd{0.582}{.017}
& \mstd{0.501}{.033} 
& \mstd{0.256}{.090} 
& \mstd{0.599}{.054} 
& \textbf{\mstd{0.610}{.042}} \\

PAAD  
& \mstd{0.504}{.042} 
& \mstd{0.465}{.063} 
& \mstd{0.543}{.052}
& \mstd{0.516}{.056} 
& \mstd{0.006}{.004} 
& \mstd{0.574}{.050} 
& \textbf{\mstd{0.580}{.052}}
& \mstd{0.535}{.060} 
& \mstd{0.503}{.077} 
& \mstd{0.599}{.053}
& \mstd{0.567}{.063} 
& \mstd{0.009}{.006} 
& \mstd{0.609}{.051} 
& \textbf{\mstd{0.622}{.053}} \\

SKCM  
& \mstd{0.706}{.053} 
& \mstd{0.645}{.088} 
& \mstd{0.758}{.011}
& \mstd{0.748}{.002} 
& \mstd{0.003}{.001} 
& \mstd{0.813}{.030} 
& \textbf{\mstd{0.817}{.025}}
& \mstd{0.691}{.047} 
& \mstd{0.648}{.007} 
& \mstd{0.660}{.069}
& \mstd{0.663}{.063} 
& \mstd{-0.008}{.017} 
& \mstd{0.707}{.126} 
& \textbf{\mstd{0.732}{.123}} \\

COAD  
& \mstd{0.567}{.030} 
& \mstd{0.254}{.067} 
& \mstd{0.648}{.031}
& \mstd{0.579}{.035} 
& \mstd{0.048}{.042} 
& \mstd{0.649}{.055} 
& \textbf{\mstd{0.665}{.041}}
& \mstd{0.423}{.002} 
& \mstd{0.165}{.033} 
& \mstd{0.553}{.091}
& \mstd{0.535}{.027} 
& \mstd{0.054}{.047} 
& \mstd{0.574}{.060} 
& \textbf{\mstd{0.598}{.028}} \\

READ  
& \mstd{0.190}{.114} 
& \mstd{0.231}{.132} 
& \mstd{0.408}{.088}
& \mstd{0.208}{.136} 
& \mstd{-0.001}{.004} 
& \mstd{0.428}{.051} 
& \textbf{\mstd{0.449}{.036}}
& \mstd{0.253}{.038} 
& \mstd{0.216}{.066} 
& \mstd{0.329}{.055}
& \mstd{0.186}{.027} 
& \mstd{0.001}{.002} 
& \mstd{0.418}{.068} 
& \textbf{\mstd{0.444}{.100}} \\

CCRCC 
& \mstd{0.317}{.071} 
& \mstd{0.306}{.058} 
& \mstd{0.391}{.069}
& \mstd{0.291}{.166} 
& \mstd{0.257}{.089} 
& \mstd{0.457}{.056} 
& \textbf{\mstd{0.470}{.059}}
& \mstd{0.332}{.075} 
& \mstd{0.312}{.071} 
& \mstd{0.385}{.080}
& \mstd{0.317}{.179} 
& \mstd{0.257}{.106} 
& \mstd{0.469}{.059} 
& \textbf{\mstd{0.476}{.065}} \\

HCC   
& \mstd{0.291}{.044} 
& \mstd{0.121}{.111} 
& \mstd{0.256}{.058}
& \mstd{0.086}{.070} 
& \mstd{-0.002}{.003} 
& \mstd{0.382}{.104} 
& \textbf{\mstd{0.433}{.133}}
& \mstd{0.461}{.060} 
& \mstd{0.208}{.114} 
& \mstd{0.398}{.075}
& \mstd{0.118}{.107} 
& \mstd{0.004}{.006} 
& \mstd{0.541}{.106} 
& \textbf{\mstd{0.564}{.087}} \\

LUNG  
& \mstd{0.734}{.017} 
& \mstd{0.704}{.017} 
& \mstd{0.757}{.004}
& \mstd{0.750}{.004} 
& \mstd{0.004}{.000} 
& \mstd{0.782}{.008} 
& \textbf{\mstd{0.783}{.005}}
& \mstd{0.768}{.019} 
& \mstd{0.737}{.017} 
& \mstd{0.779}{.014}
& \mstd{0.782}{.012} 
& \mstd{0.006}{.002} 
& \mstd{0.814}{.003} 
& \textbf{\mstd{0.815}{.000}} \\

LYMPH 
& \mstd{0.545}{.130} 
& \mstd{0.511}{.109} 
& \mstd{0.564}{.118}
& \mstd{0.594}{.104} 
& \mstd{0.253}{.080} 
& \mstd{0.629}{.128} 
& \textbf{\mstd{0.644}{.123}}
& -- 
& -- 
& --
& -- 
& --
& -- 
& -- \\

\midrule
\avgrow
\textbf{Average}
& 0.497 
& 0.419 
& 0.554
& 0.494 
& 0.141 
& 0.600 
& \textbf{0.613}
& 0.520
& 0.424
& 0.562
& 0.491
& 0.129
& 0.613
& \textbf{0.628} \\

\bottomrule
\end{tabular}
}
\end{table*}

% p-value
\begin{table}[t]
\centering
\caption{Gene-level PCC significance analysis on HEST-1K. For each dataset, gene-wise PCC values were first averaged across cross-validation folds, yielding 50 gene-level PCC values per dataset. These values were then pooled across all 10
datasets, producing 500 dataset-gene pairs for each pairwise comparison. We compared \ourmodel~against each baseline using paired Wilcoxon signed-rank tests, followed by Holm correction. Mean Diff. denotes \ourmodel~minus the baseline.}
\label{tab:pooled-gene-pcc-significance}
\resizebox{0.6\textwidth}{!}{
\begin{tabular}{lccccc}
\toprule
Baseline & Mean Diff. & Wins & Losses & Raw $p$ & Holm $p$ \\
\midrule
ST-Net    & 0.1152 & 479 & 21  & $4.16\times10^{-79}$ & $1.66\times10^{-78}$ \\
BLEEP     & 0.1935 & 497 & 3   & $2.33\times10^{-83}$ & $1.17\times10^{-82}$ \\
BLEEP-UNI & 0.0584 & 424 & 76  & $9.15\times10^{-55}$ & $1.83\times10^{-54}$ \\
TRIPLEX   & 0.1189 & 475 & 25  & $1.99\times10^{-78}$ & $5.98\times10^{-78}$ \\
STFlow    & 0.0129 & 330 & 170 & $1.49\times10^{-16}$ & $1.49\times10^{-16}$ \\
\bottomrule
\end{tabular}
}
\end{table} %gene-level pcc p-value
\begin{table}[t]
  \centering
  \caption{Pooled Hallmark-Gene PCC significance analysis on HEST-1K. For each dataset, pathway-level Hallmark-Gene PCC values were first aggregated across cross-validation folds by matching pathway names and averaging the corresponding gene-level
  pathway PCC values when a pathway was present in multiple folds. The resulting dataset-pathway pairs were then pooled across all datasets and compared between \ourmodel~and each baseline using paired Wilcoxon signed-rank tests, followed by Holm
  correction. Mean Diff. denotes \ourmodel~minus the baseline.}
  \label{tab:pooled-hallmark-gene-significance}
  \resizebox{0.6\textwidth}{!}{
  \begin{tabular}{lccccc}
  \toprule
  Baseline & Mean Diff. & Wins & Losses & Raw $p$ & Holm $p$ \\
  \midrule
  ST-Net    & 0.1153 & 28 & 0 & $7.45\times10^{-9}$  & $3.73\times10^{-8}$ \\
  BLEEP     & 0.1921 & 28 & 0 & $7.45\times10^{-9}$  & $3.73\times10^{-8}$ \\
  BLEEP-UNI & 0.0598 & 27 & 1 & $1.49\times10^{-8}$  & $3.73\times10^{-8}$ \\
  TRIPLEX   & 0.1145 & 28 & 0 & $7.45\times10^{-9}$  & $3.73\times10^{-8}$ \\
  STFlow    & 0.0112 & 22 & 6 & $9.32\times10^{-5}$  & $9.32\times10^{-5}$ \\
  \bottomrule
  \end{tabular}
  }
  \end{table} %gene-level hpcc p-value

% model complexity
\begin{table}[t]
  \centering
  \caption{Model complexity comparison on the HEST benchmark. \#Params counts the trainable parameters used under the current benchmark training setup, excluding external frozen or precomputed feature extractors. Avg. Time / Epoch is averaged over all
  dataset-fold runs of each method.}
  \label{tab:model-complexity}
  \resizebox{0.65\textwidth}{!}{
  \begin{tabular}{lccc}
  \toprule
  Method & \#Params (M) & Model Size (MB) & Avg. Time / Epoch (min) \\
  \midrule
  ST-Net    & 11.202 & 42.73  & 0.301 \\
  BLEEP     & 24.178 & 92.23  & 0.922 \\
  BLEEP-UNI & 38.205 & 145.74 & 8.342 \\
  TRIPLEX   & 53.908 & 205.64 & 1.433 \\
  Stem      & 35.581 & 135.73 & 0.230 \\
  STFlow    & 1.148  & 4.38   & 0.038 \\
  \ourmodel (Our)  & 1.148  & 4.38   & 0.038 \\
  \bottomrule
  \end{tabular}
  }
  \end{table}

\section{Evaluation Metrics}

\label{sec:metrics}

We evaluate histology-to-ST prediction from two complementary perspectives: expression accuracy and dependency-structure fidelity. Expression accuracy is measured using Pearson correlation and Hallmark-Gene PCC, while dependency-structure fidelity is measured using a gene-gene correlation preservation metric (GGC-Pearson). As an auxiliary expression-level error metric, we also report MSE-Mean. 
% Additional metrics are provided in Appendix~\ref{app:metrics}.

\paragraph{Pearson correlation coefficient (PCC).}
We first evaluate gene-level prediction fidelity using the PCC, computed independently for each gene across spatial spots. Given predicted expression $\hat{y}_{:,g}$ and ground-truth expression $y_{:,g}$ for gene $g$, the PCC is defined as
\begin{equation}
r_g = \frac{\mathrm{cov}(\hat{y}_{:,g}, y_{:,g})}{\sigma(\hat{y}_{:,g}) \, \sigma(y_{:,g})}.
\end{equation}
We report the mean PCC over all evaluated genes:
\begin{equation}
\mathrm{Pearson} = \frac{1}{|\mathcal{G}|} \sum_{g \in \mathcal{G}} r_g,
\end{equation}
where $\mathcal{G}$ denotes the set of target genes. This metric captures how well the model preserves spatial variation patterns for individual genes.

\paragraph{Hallmark-Gene PCC (HPCC).}
To assess biological consistency on functionally meaningful genes, we adopt a pathway-informed evaluation based on Hallmark gene sets from MSigDB \cite{broad2025}. 
The equation is shown in Equation \ref{eq:hpcc}.
% Let $\mathcal{P}=\{P_m\}_{m=1}^M$ denote the collection of Hallmark gene sets, and let
% \begin{equation}
% \tilde{P}_m = P_m \cap \mathcal{G}
% \end{equation}
% be the subset covered by the evaluated gene panel. Following the implementation, we evaluate only Hallmark sets with at least five covered genes, namely
% \begin{equation}
% \mathcal{M}=\{m \mid |\tilde{P}_m| \geq 5\}.
% \end{equation}
% For each valid Hallmark set, we compute the average gene-wise PCC over the covered genes:
% \begin{equation}
% s_m^{\mathrm{hallmark}} = \frac{1}{|\tilde{P}_m|} \sum_{g \in \tilde{P}_m} r_g,
% \end{equation}
% and report the mean across all valid Hallmark sets:
% \begin{equation}
% \mathrm{Hallmark\mbox{-}Gene} = \frac{1}{|\mathcal{M}|} \sum_{m \in \mathcal{M}} s_m^{\mathrm{hallmark}}.
% \end{equation}
Compared with Pearson, this metric emphasizes prediction quality on genes associated with curated biological programs.

\paragraph{Gene-Gene Correlation (GGC).}
Our method is motivated by the observation that gene expression is not independent across genes, but instead exhibits structured gene-gene dependencies. To evaluate whether a model preserves this dependency structure, we compare the gene-gene correlation matrices induced by the predicted and ground-truth transcriptomes across spatial spots.

Let $\hat{Y}, Y \in \mathbb{R}^{N \times G}$ denote the predicted and ground-truth expression matrices over $N$ spots and $G$ target genes. We compute Pearson gene-gene correlation matrices across spatial spots using the full evaluated gene set:
\begin{equation}
\hat{C}_{g,h} = \mathrm{corr}(\hat{y}_{:,g}, \hat{y}_{:,h}), \qquad
C_{g,h} = \mathrm{corr}(y_{:,g}, y_{:,h}).
\end{equation}
We then extract the upper-triangular entries (excluding the diagonal), vectorize them, and compute the Pearson correlation between the predicted and ground-truth vectors:
\begin{equation}
\mathrm{GGC\mbox{-}Pearson} = \mathrm{corr}\!\left(\mathrm{vec}_{\triangle}(\hat{C}),\; \mathrm{vec}_{\triangle}(C)\right).
\end{equation}
For numerical stability, undefined correlation values arising from constant-expression genes are set to zero in the implementation. This metric measures the global extent to which the predicted transcriptome preserves real gene-gene dependency structure.

% GGC-MAE is excluded!!
% As a supplementary absolute-error metric on the same dependency structure, we also compute
% \begin{equation}
% \mathrm{GGC\mbox{-}MAE} =
% \frac{1}{|\mathcal{E}|}
% \sum_{(g,h)\in \mathcal{E}}
% \left| \hat{C}_{g,h} - C_{g,h} \right|,
% \end{equation}
% where $\mathcal{E}$ denotes the set of upper-triangular gene pairs. Unlike GGC-Pearson, which measures global structural agreement, GGC-MAE directly quantifies the average absolute mismatch in pairwise gene-gene correlation strength.

% GeneCorrTopK-F1 is excluded as well!!
% As a supporting metric, we also evaluate recovery of the strongest gene-gene dependencies using GeneCorrTopK-F1. Specifically, we rank gene pairs by the absolute values of their correlation strengths in $\hat{C}$ and $C$, select the top
% \begin{equation}
% K = \max\!\left(1,\left\lceil 0.05 \cdot |\mathcal{E}| \right\rceil \right),
% \end{equation}
% where $|\mathcal{E}|$ is the number of valid upper-triangular gene pairs, and compute precision, recall, and F1-score between the predicted and ground-truth top-$K$ edge sets. This metric complements GGC-Pearson by focusing on recovery of the strongest gene-gene relationships rather than the full correlation structure.

\paragraph{Mean Squared Error (MSE).}
In addition to correlation-based metrics, we report the MSE to measure the absolute deviation between predicted and ground-truth gene expression values in Table~\ref{tab:mse}. 
For a given gene $g$, let $\hat{y}_{ig}$ and $y_{ig}$ denote the predicted and ground-truth expression values at spot $i$, respectively, and let $N$ be the number of test spots. The gene-wise MSE is defined as
\begin{equation}
\mathrm{MSE}_g = \frac{1}{N}\sum_{i=1}^{N}\left(\hat{y}_{ig} - y_{ig}\right)^2.
\end{equation}
We then summarize performance across genes by averaging the gene-wise MSE values:
\begin{equation}
\mathrm{MSE\text{-}Mean} = \frac{1}{G}\sum_{g=1}^{G}\mathrm{MSE}_g,
\end{equation}
where $G$ is the number of evaluated genes. Lower MSE-Mean indicates better predictive accuracy, as it reflects smaller absolute discrepancies between predicted and observed expression profiles.

% MSE-Q2 is excluded as well!!
% \paragraph{MSE-Q2.}
% As an auxiliary error-based metric, we further report MSE-Q2, defined as the median of gene-wise mean squared errors across the target genes. For each gene $g$, the mean squared error is
% \begin{equation}
% e_g = \frac{1}{N} \sum_{i=1}^{N} \left(\hat{y}_{i,g} - y_{i,g}\right)^2,
% \end{equation}
% where $N$ is the number of spatial spots. We then summarize the typical error across genes using the median:
% \begin{equation}
% \mathrm{MSE\mbox{-}Q2} = \mathrm{median}_{g \in \mathcal{G}} \; e_g.
% \end{equation}
% We use the median rather than the mean because gene-wise errors are often skewed, and the median provides a more robust estimate of typical prediction error.

Overall, higher values indicate better performance for PCC, HPCC, and GGC-Pearson, whereas lower values are better for MSE-Mean.

\section{Supplementary Results}
\label{sec:supplementary_results}

\subsection{Evaluation on Expanded Gene Sets on the HEST-1K Benchmark.}
To further evaluate whether the advantage persists beyond a small target-gene set, Table~\ref{tab:pcc-100-150-200} reports results on expanded HMHVG-100, HMHVG-150, and HMHVG-200 settings. 
Compared with STFlow, our method maintains a higher average PCC across all three gene scopes, with average PCC improving from $0.584$ to $0.591$ for HMHVG-100, from $0.557$ to $0.566$ for HMHVG-150, and from $0.538$ to $0.543$ for HMHVG-200. 
A similar pattern is observed for HPCC, where our method improves the average score from $0.594$ to $0.597$, $0.569$ to $0.575$, and $0.548$ to $0.550$ under the HMHVG-100, HMHVG-150, and HMHVG-200 settings, respectively. 
Although the absolute margins are moderate, the improvements are stable across increasingly broad gene sets, supporting the effectiveness of dependency-aware generation when the prediction target expands from highly variable marker genes to larger biologically relevant gene scopes. 
For PAAD, these expanded-gene evaluations are omitted because fewer than 100 selected genes are available after preprocessing.

\subsection{Results with Additional Metrics on HEST-1K Benchmark}
\label{sec:additional_metrics}

In the main text, we report PCC and HPCC as the primary evaluation metrics. Here, we further provide GGC-Pearson and MSE-Mean to assess complementary aspects of prediction quality. GGC-Pearson measures the agreement between the predicted and ground-truth gene-gene correlation structures, thereby evaluating whether a method preserves transcriptomic dependency patterns beyond per-gene prediction. MSE-Mean measures the mean squared prediction error averaged over genes, reflecting the numerical fidelity of the generated expression values.

As shown in Table~\ref{tab:ggc-pearson}, our method achieves the highest average GGC-Pearson of $0.648$, outperforming STFlow ($0.633$) and BLEEP-UNI ($0.628$). Although the best-performing method varies across individual datasets, the strongest average result indicates that our dependency-aware design better preserves global gene-gene correlation structure across the HEST-1K benchmark. 
Table~\ref{tab:mse} further shows that our method obtains the lowest average MSE-Mean of $2.033$, compared with $2.119$ for STFlow and $2.331$ for BLEEP-UNI, and achieves the best MSE-Mean on most datasets. These results suggest that the \ourmodel~improves not only correlation-based prediction quality, but also the numerical reconstruction fidelity of spatial gene expression.

\subsection{Results on STImage-1K4M}
Table~\ref{tab:stimage} reports supplementary evaluation on the two largest human cancer categories from STImage-1K4M~\cite{chen2024stimage}. 
Our method improves PCC over STFlow on both Brain and Breast, increasing the average PCC from $0.530$ to $0.542$. 
For HPCC, only Breast satisfies the Hallmark gene-set coverage criterion, where our method improves the score from $0.464$ to $0.473$. 
These results are consistent with the HEST-1K findings and suggest that the proposed dependency-aware design remains beneficial on larger ST collections.

\subsection{Results for Exploratory Extension with Graph-Structured Masking}
\label{sec:results_GraphStructuredMasking}
Table~\ref{tab:graph_masking_variant} reports the empirical results of this variant. 
Graph-structured masking improves over STFlow on average, increasing PCC from $0.600$ to $0.608$ on the 10 HEST-1K benchmarks and from $0.530$ to $0.534$ on the two STImage-1K4M categories. 
However, it remains below our main model, which achieves $0.613$ and $0.542$, respectively. 
We hypothesize that masking entire graph neighborhoods can make each masked block relatively large, which weakens the fine-grained timestep-dependent control provided by annealed random masking. 
As a result, the structured variant provides useful but limited gains, further supporting the importance of the proposed annealed masking design in the main model.

\subsection{Gene-Level Statistical Significance on HEST-1K}
\label{sec:gene-level-pcc-pvalue}
To provide a finer-grained complementary analysis beyond dataset-level significance testing, we performed a pooled gene-level PCC analysis on HEST-1K. Specifically, for each dataset, we first averaged each gene's PCC across cross-validation folds, resulting in 50 gene-level PCC values per dataset. Pooling these values across all 10 datasets yielded 500 dataset-gene pairs, which were then compared between \ourmodel~and each baseline using paired Wilcoxon signed-rank tests with Holm correction. As shown in Table~\ref{tab:pooled-gene-pcc-significance}, \ourmodel~significantly outperforms all baselines under this pooled gene-level analysis, with adjusted $p$-values far below 0.05 in every comparison. The strongest improvements are observed against BLEEP and TRIPLEX, while the comparison with STFlow remains significant but with a smaller effect size, indicating that STFlow is the most competitive baseline under this analysis.

\subsection{Hallmark-Gene Statistical Significance on HEST-1K}
\label{sec:gene-level-hpcc-pvalue}
We further performed an HPCC significance analysis on HEST-1K. Specifically, for each dataset, we matched Hallmark pathways across cross-validation folds by pathway name and averaged the corresponding pathway-level gene-PCC values when the same pathway appeared in multiple folds. These dataset-pathway pairs were then pooled across datasets and compared between \ourmodel~and each baseline using paired Wilcoxon signed-rank tests with Holm correction. As shown in Table~\ref{tab:pooled-hallmark-gene-significance}, \ourmodel~significantly outperforms all baselines under this pooled Hallmark-Gene analysis, with particularly large gains over BLEEP and TRIPLEX. The comparison with STFlow remains significant, although the effect size is substantially smaller than for the other baselines.

\subsection{Model Complexity}
\label{sec:model-complexity}
Table~\ref{tab:model-complexity} summarizes model complexity under the current HEST benchmark training setup. We report the number of trainable parameters, the corresponding fp32 model size, and the average training time per epoch. Notably, \ourmodel~and STFlow have the same trainable parameter count because both methods use the same trainable denoising backbone, while differing in loss design and regularization. Compared with the larger end-to-end or partial-finetuning baselines, \ourmodel~maintains a smaller trainable footprint and a very low per-epoch training cost.

\section{Limitations and Future Work}
\label{sec:limitations}

Despite the strong empirical performance of \ourmodel, several limitations remain. 
First, although HEST-1K provides a diverse cross-platform benchmark with standardized patient-stratified splits, our evaluation does not include a fully independent external cohort. 
Such external validation remains essential for assessing clinical robustness under variations in tissue preparation, sequencing platform, staining protocol, scanner type, and cohort composition. 
Future work will therefore incorporate independently collected ST cohorts to more rigorously evaluate out-of-distribution generalization.

Second, although \ourmodel~achieves the best overall performance across the evaluated datasets, the current prediction quality remains insufficient for direct clinical translation. 
Histology-to-ST generation is intrinsically challenging, as molecular states cannot always be reliably inferred from morphology alone and may be affected by cohort size, tissue heterogeneity, technical noise, and gene-panel coverage. 
Improving robustness in clinically relevant settings will require stronger dependency-aware modeling and higher-quality ST training data.

Finally, our current evaluation focuses primarily on reconstruction fidelity and transcriptomic structure preservation. 
A key next step is to assess whether the generated ST profiles provide measurable benefits for downstream biological and clinical analyses, such as tissue-domain identification, pathway activity estimation, biomarker discovery, patient stratification, and prognosis modeling. 
Demonstrating utility in such downstream tasks is important for establishing the practical value of histology-conditioned ST generation beyond benchmark-level prediction metrics.

%%%%%%%%%%%%%%%%%%%%%%%%%%%%%%%%%%%%%%%%%%%%%%%%%%%%%%%%%%%%

% \newpage
% \input{checklist.tex}

\end{document}